\newtheorem{theorem}{Theorem}[section] 
\newtheorem{lemma}[theorem]{Lemma} 
\newtheorem{definition}[theorem]{Definition} 
\begin{document}
\theoremstyle{definition}

\title{Robust Peg-in-Hole Assembly \\under Uncertainties via Compliant and Interactive Contact-Rich Manipulation}


\author{\authorblockN{Yiting Chen$^{1}$,
Kenneth Kimble$^{2}$,
Howard H. Qian$^{1}$, 
Podshara Chanrungmaneekul$^{1}$,
Robert Seney$^{2}$, 
Kaiyu Hang$^{1}$}
\authorblockA{$^{1}$Department of Computer Science, Rice University }
\authorblockA{$^{2}$U.S. National Institute of Standards and Technology}
}

\twocolumn[{
\renewcommand\twocolumn[1][]{#1}
\maketitle
\begin{center}
    \captionsetup{type=figure}
    \includegraphics[width=\textwidth]{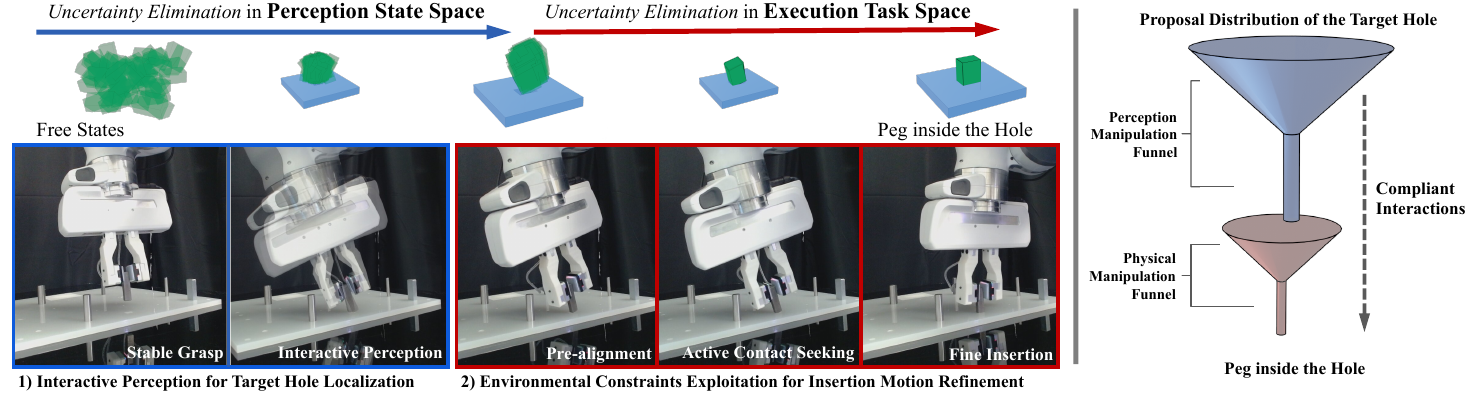}
    \captionof{figure}{\textbf{Motivation.} Acknowledging that real-world uncertainties are inevitable, we exploit environmental constraints to shape the manipulation process toward the desired outcome rather than expecting the robot to precisely execute any trajectories. The example demonstrates the system's two major components: 1) progressively identify environmental constraints to localize the target hole; 2) form a constant contact between the inclined peg and its corresponding corner from the hole to serve as motion constraints for insertion refinement. Such an uncertainty-absorbing paradigm is formulated based on the composition of \textbf{manipulation funnels} ---a concept that guarantees a strict space shrinkage through task-specific interactions, in the perception state space and execution task space. }
\label{overall}
\end{center}

}]

\begin{abstract}
Robust and adaptive robotic peg-in-hole assembly under tight tolerances is critical to various industrial applications. However, it remains an open challenge due to perceptual and physical uncertainties from contact-rich interactions that easily exceed the allowed clearance. In this paper, we study how to leverage contact between the peg and its matching hole to eliminate uncertainties in the assembly process under unstructured settings. By examining the role of compliance under contact constraints, we present a manipulation system that plans collision-inclusive interactions for the peg to 1) iteratively identify its task environment to localize the target hole and 2) exploit environmental contact constraints to refine insertion motions into the target hole without relying on precise perception, enabling a robust solution to peg-in-hole assembly. By conceptualizing the above process as the composition of funneling in different state spaces, we present a formal approach to constructing manipulation funnels as an uncertainty-absorbing paradigm for peg-in-hole assembly. The proposed system effectively generalizes across diverse peg-in-hole scenarios across varying scales, shapes, and materials in a learning-free manner. Extensive experiments on a NIST Assembly Task Board (ATB) and additional challenging scenarios validate its robustness in real-world applications.

\end{abstract}

\IEEEpeerreviewmaketitle

\section{Introduction}
Robotic peg-in-hole assembly is a foundational skill in various industrial applications and beyond, involving inserting a peg into its matching hole on another object under tight clearances~\cite{kimble2020benchmarking}. Common strategies seek to optimize the insertion trajectory under certain conditions, either through explicitly modeling the task environment or formulating the problem as a POMDP~\cite{lauri2022partially} (partially observable Markov decision process) to enable reactive policies based on sensory input. Regardless of the debate on how accurately a model describes reality, both types of approach neglect that the onus of fine manipulation lies in the execution~\cite{toussaint2014dual}. It is the \textit{execution} that brings the optimized result from planning into the real world through motor actuation by the controller. However, real-world perception challenges and physical uncertainties, such as those arising from hardware limitations, can introduce small deviations between planned and executed trajectories. Under the tight tolerance of peg-in-hole assembly, even minor uncertainties can lead to undesired manipulation outcomes. Therefore, robotic peg-in-hole assembly requires non-trivial effort to tackle real-world uncertainties beyond solely planned trajectories. 

Acknowledging that planned trajectories are rarely executed perfectly in the real world due to uncertainties, focusing on the outcomes of interactions between the system and the environment~\cite{eppner2015exploitation, eppner2015planning}, rather than on explicit trajectories, offers a promising approach to achieving robust manipulation. This formulation often requires the role of compliance, either from mechanical design~\cite{deimel2016novel, odhner2014compliant} or computational simulation~\cite{hogan1984impedance}, to passively adapt to external contacts. Rather than causing undesired outcomes, environmental constraints can shape the compliant manipulation process toward the desired outcome under deliberate exploitation~\cite{lozano1984automatic}, even without elaborate contact models~\cite{posa2014direct, cheng2022contact}. With this insight, we propose leveraging the contacts between the peg and its matching hole as an advantage to eliminate uncertainties for fine insertions.

In this paper, we introduce a manipulation system (shown in Fig.~\ref{overall}) that plans compliant interactions between the manipulated peg and its task environment to identify and exploit contact constraints for robust insertion. Safe contact is ensured by impedance control~\cite{hogan1984impedance}, which allows external contact to alter the pose of the peg by mimicking a spring-damper behavior. Since the steady pose of the peg after each interaction is observable and is a combined effect of the environmental constraints and potential from impedance control, the system can 1) identify its encountered constraints to shrink the target hole's proposal distribution progressively and 2) exploit the environmental constraints to shape the motion of the peg to enter its matching hole. Specifically, the exploitation process involves maintaining constant contact between the peg and the hole to serve as motion constraints. Since the environmental constraints can shape a large margin of uncertainties and the interaction mechanism of force-pose regulation is independent of the specific trajectory taken, the proposed system can tolerate uncertainties smaller than its execution precision, even under tight tolerances.

As the system dynamics progressively eliminates uncertainty in perception and execution, we formulate the proposed system as a composition of \textit{manipulation funnels} in different state spaces. The original concept of the \textit{manipulation funnel} was introduced by \citet{mason1985mechanics} to eliminate uncertainties in object location, emphasizing that the robustness of funnel-based manipulation requires the co-design of robot interaction and task mechanics. A general formulation of the funneling process can be defined in any task-relevant space as a progressive space shrinkage for uncertainty elimination. Inspired by the planar theory of automatic motion synthesis~\cite{lozano1984automatic}, we establish a formal approach to constructing manipulation funnels for robust fine insertion of general prismatic pegs in the real world. The primary contributions of this paper are:
\begin{itemize}
    \item We present an uncertainty-absorbing paradigm for general peg-in-hole insertion based on the concept of manipulation funnel, which does not plan on the explicit trajectories but rather the outcome of each compliant interaction. 
    \item  We present a formal approach to constructing the funneling process for both perception and execution, which can be further composed for general peg-in-hole insertion.
    \item  We provide a system with detailed algorithms to implement the above funnel-based manipulation paradigm on real-world assembly tasks.
    \item  We conduct ablation studies through extensive experiments on a standard assembly benchmark NIST ATB and additional challenging tasks to validate the robustness and generalizability of the proposed paradigm and system.
\end{itemize}

The rest of this paper is organized as follows: Section~\ref{related_work} reviews related works. Section~\ref{prelim} introduces the preliminaries and problem statement for funnel-based manipulation planning in peg-in-hole assembly. Section~\ref{funnel_based_planning} introduces the task mechanics and presents a formal approach to constructing manipulation funnels in perception and execution. Section~\ref{sec:exp} demonstrates the robustness of the proposed system in simulation and real-world experiments. We discuss the limitations under current settings in Section~\ref{sec:limits} and summarize the contributions in Section~\ref{sec:conclusion}.

\section{Related Work}

\label{related_work}

\subsection{Peg-in-Hole Assembly}
Over the past decade, pipeline-based and end-to-end approaches have driven advancements in robotic peg-in-hole systems. Pipeline-based approaches aim to explicitly estimate intermediate system states for motion planning. Previous works~\cite{jin2021contact, tang2016autonomous, kim2022active} leverage tactile or force/torque sensors for contact state estimation. Vision-based methods~\cite {xie2022learning, haugaard2021fast} estimate alignment deviations for correction or search motion planning. 
This modular design provides interpretability and flexibility, allowing for the integration of both analytical and data-driven methods. 
In contrast, end-to-end approaches~\cite{levine2016end} aim at deriving an implicit function to map sensory inputs to motor skills from trial-and-error. The entire system connects feature extraction to planning in a unified manner~\cite{dong2021tactile, luo2021robust, Tang-RSS-24, luo2018deep, inoue2017deep, luo2019reinforcement}. 

However, previous methods are mainly based on the principle that assembly is a relative positioning task~\cite{simunovia1979information}, assuming the robot will precisely execute the issued trajectory. Our research targets a different perspective~\cite{lozano1984automatic}, as the fine insertion motion is a passive refinement process that exploits environmental constraints.

\subsection{Compliance-enabled Manipulation}
Compliance mitigates unknown external disturbances through passive adaptability. The role of compliance in robot manipulation often involves compensation for uncertainty and safe interaction with the environment, which has been adopted in a wide range of tasks, including dexterous grasping~\cite{Chen-RSS-24, li2016dexterous, deimel2016novel}, in-hand manipulation~\cite{li2014learning, morgan2022complex, Bhatt-RSS-21, hang2021manipulation}, grasp adaptation~\cite{li2014learning_adap, hang2016hierarchical, khadivar2023adaptive}, assembly~\cite{morgan2021vision, luo2024serl, zhang2021learning} and human-robot co-manipulation~\cite{Shao-RSS-24, peternel2018robot}.

Additionally, compliance-enabled contact can safely exploit environmental constraints to eliminate uncertainties, ensuring robust and dexterous manipulation~\cite{eppner2015planning, hang2019pre, dafle2014extrinsic, shao2020learning, zhou2023learning, hou2020manipulation, chen2023sliding}. In this work, we propose a formal approach for compliance-enabled contact between the peg and the hole to eliminate uncertainty in insertion motions.

\subsection{Funnel-based Manipulation}
\citet{mason1985mechanics} proposed the original concept of \textit{manipulation funnels} to eliminate uncertainties based on task mechanics and task-specific interaction. The importance of exploiting task mechanics to eliminate uncertainties is also revealed in concepts such as \textit{pre-image}~\cite{lozano1984automatic} and \textit{backprojections}~\cite{erdmann1985using}. Early works have applied funnel-based designs in manipulation tasks, including ball batting~\cite{burridge1999sequential}, part orienting~\cite{goldberg1993orienting, erdmann1988exploration} and feeding~\cite{akella2000parts}, grasping~\cite{christiansen1991manipulation}, and peg-in-hole assembly~\cite{whitney1982quasi}. \citet{Bhatt-RSS-21} investigated funnel-based action primitives for in-hand manipulation through an empirical study. \citet{canberk2023cloth} proposes ``canonicalized-alignment" as task mechanics to funnel the large space of possible cloth configurations into a smaller, structured one.

The major challenges in developing funnel-based manipulation lie twofold: 1) formulating the general task mechanics for a given manipulation task and 2) defining task-specific interaction beyond hardware-associated control. To tackle the above challenges in real-world peg-in-hole tasks, we provide a general formulation of the task mechanics of peg-in-hole insertion and an object-centric interaction mechanism, which is not dependent on any specific hardware. 

\section{Preliminaries and Problem Statement}
\label{prelim}
In this work, we consider the peg-in-hole assembly problem as inserting a peg into its matching hole on a planar board surface as illustrated in Fig.~\ref{fig:task}-(a). We aim to plan compliant motions of a manipulated peg that frequently makes and breaks contact with its task environment, first to perceive the location of the matching hole on the task board and then insert the peg into it under tight clearance. 
\begin{figure}
    \centering
    \includegraphics[width=\linewidth]{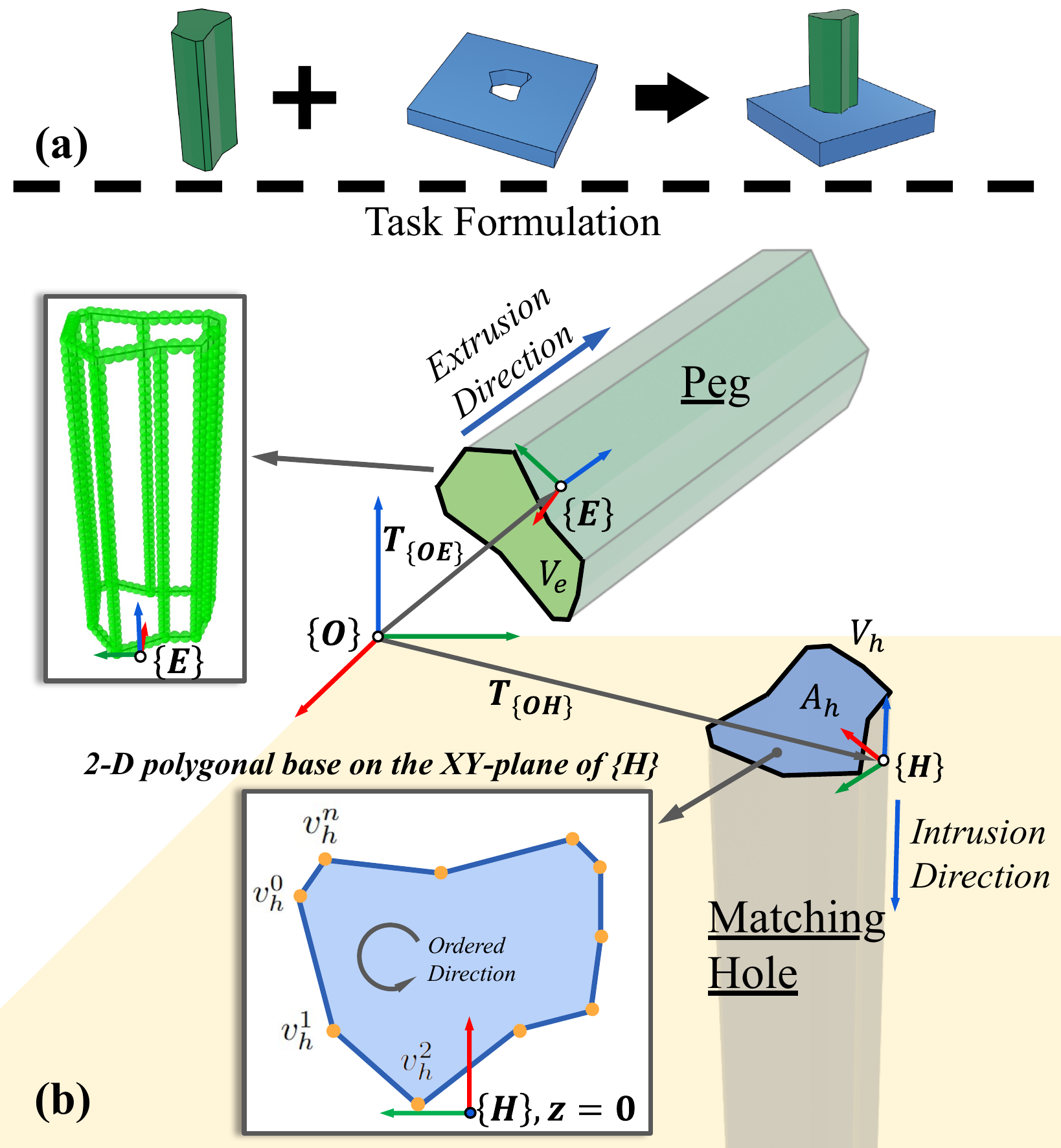}
    \caption{(a) The peg-in-hole problem is considered as inserting a peg into its matching hole on a planar board (a randomly generated peg is adopted as the example). (b) An overview of the task formulation is presented, including the geometry representation of the peg, the hole, and the overall task environment.}
    \label{fig:task}
\vspace{-0.5cm}
\end{figure}
 
\subsection{Preliminaries}

\subsubsection{Geometry of the Peg and Hole}\label{geo_pah} The 3-dimensional (3-D) geometry of the hole is described in the \textit{hole reference frame} $\{H\}$, as an intrusion of a 2-D polygonal base defined by an ordered sequence of vertices $\mathcal{V}_h=(v^0_h, v^1_h, ..., v^n_h), v_h\in\mathbb{R}^3, z_{v_h}=0$ on the $XY$-plane of  $\{H\}$. $S(\mathcal{V}_h)=\cup_{i=0}^n \{v_h^i \}\subset \mathbb{R}^3$ is the corresponding unordered set of the $\mathcal{V}_h$. The intrusion is along the negative direction of the $Z$-axis of $\{H\}$. $\mathcal{V}_h$ defines a planar region $\mathcal{A}_h\subset\mathbb{R}^3$ on the $XY$-plane of $\{H\}$ as:
\begin{equation}
\label{area_def}
    \mathcal{A}_h = \{ (x, y, z)\in \mathbb{R}^3  \mid f_{\mathcal{A}_h}(x,y)\leq0 \land z=0\}
\end{equation}
where $f_{\mathcal{A}_h}(x, y)$ is an implicit distance function to describe the boundary of $\mathcal{A}_h$ as $f_{\mathcal{A}_h}(x, y)=0$ and its interior area as $f_{\mathcal{A}_h}(x, y)<0$, exterior area as $f_{\mathcal{A}_h}(x, y)>0$.

The 3-D geometry of the peg is described in the \textit{peg reference frame} $\{E\}$, as an extrusion of a 2-D polygon base  $\mathcal{V}_e=(v^0_e, v^1_e, ..., v^n_e), v_e\in\mathbb{R}^3, z_{v_e}=0$ on the $XY$-plane of  $\{E\}$. Its corresponding unordered set is denoted as $S(\mathcal{V}_e)=\cup_{i=0}^n \{v_e^i \}\subset \mathbb{R}^3$.The extrusion is along the positive direction of $Z$-axis of $\{E\}$, and $\mathcal{V}_e$ is considered approximated equal to $\mathcal{V}_h$ but defined in different frames. 

\subsubsection{Task Environment}
We specify the \textit{Cartesian world frame} $\{O \}$ attached to the task board, as its $XY$-plane is aligned with the board surface and its orthogonal $Z$-axis points outward the task board. The pose of the hole in the world frame is denoted as $T_{\{OH\}}\in \text{SE(3)}$. We describe the task environment using a occupancy function $\Gamma: \mathbb{R}^3\mapsto [-1, 0, 1]$, which is defined as follows:
\small
\begin{equation}
\label{task_env}
\Gamma(x, y, z) = \left\{
             \begin{array}{lr}
             -1,&  z<0 \land f_{T_{\{OH\}}\mathcal{A}_{h}}(x, y) >0 \\
            1, &  z>0  \lor (z\leq0 \land  f_{T_{\{OH\}}\mathcal{A}_{h}}(x, y) < 0) \\
            0, & \text{Otherwise} \\
\end{array}
\right.
\end{equation}
\normalsize

in which $-1$ denotes occupied by the task board, $0$ denotes on the surface of the task board, and $1$ denotes the free space. An initial area of interest $\mathcal{A}_\text{board}\subset\mathbb{R}^3$ is specified on the $XY$-plane of $\{O\}$ that guarantees $T_{\{OH\}}\mathcal{A}_h\subset \mathcal{A}_\text{board}$. 

 \subsubsection{Compliant Interactions for the Peg}
As the peg is in a prism-shaped geometry as defined in Sec.~\ref{geo_pah}, we use the densely sampled points $\mathcal{P}_e=\{p_e^i\in \mathbb{R}^3\}_{i=1}^M$ on the edges of the peg for collision checking (considered continuous along the lateral and base edges), in which $p_e=(x,y,z)$ is in the Cartesian coordinates of $\{E\}$. The sampled points are assumed to be rigidly attached to each other, as the entire point set undergoes the same rigid body transformation (rotation and translation). The pose of frame $\{E\}$  with reference to frame $\{O\}$ at time $t$ is represented as $\mathbf{x}_{t}=[\mathbf{p}; \mathbf{r}]\in\mathbb{R}^6$, in which $\mathbf{p}\in \mathbb{R}^3$ is the position vector and $\mathbf{r}\in \mathbb{R}^3$ is the orientation represented with Euler angle. The target peg-in-hole state is denoted as $\mathbf{x}^*$. The corresponding transformation matrix of $\mathbf{x}_{t}$ is denoted as $T_{\{OE\},t}\in \text{SE(3)}$. An overview of the task environment with the manipulated peg is illustrated in Fig.~\ref{fig:task}-(b).

To enable safe contact between the peg and its task environment, we formulate the impedance control based on the desired state $\mathbf{x}^{\text{d}}_t\in\mathbb{R}^6$ at time $t$. A wrench $\mathbf{F}\in \mathbb{R}^6$ (consisting of 3-D force and torque) is applied to the peg by a Cartesian impedance controller as formulated:
\begin{equation}
\label{impedance}
    \mathbf{F} = \mathbf{K}_d (\mathbf{x}^{\text{d}}_t - \mathbf{x}_{t}) - \mathbf{D}_d \dot{\mathbf{x}}_{t}
\end{equation}
in which $\mathbf{K}_d\in\mathbb{R}^{6\times6}$ is the diagonal stiffness matrix to decouple compliance in each degree of freedom, $\mathbf{D}_d\in \mathbb{R}^{6\times6}$ is the damping matrix for movement stabilizing and $\dot{\mathbf{x}}_{t}\in \mathbb{R}^6$ is the velocity of $\{E\}$ with reference to $\{O\}$ at time $t$.

An interaction command $\mathbf{c}_t =(\mathbf{x}_{t}, \mathbf{x}_{t}^{\text{d}})$ at time $t$ is defined by its starting state $\mathbf{x}_{t}$ (considered steady as $\dot{\mathbf{x}}_{t}=0$) and a desired state $\mathbf{x}_{t}^{\text{d}}$. A compliant interaction $\mathbf{x}_{t+1}=\Pi(\mathbf{x}_{t}, \mathbf{x}_{t}^\text{d})$ between the peg and its task environment is an unknown but observable process $\Pi: \mathbb{R}^6\times\mathbb{R}^6\mapsto\mathbb{R}^6$ from the starting state $\mathbf{x}_{t}$ to the next steady state $\mathbf{x}_{t+1}$, under the drive of $\mathbf{x}_{t}^\text{d}$ and the environmental constraints in the frame of $\{O\}$. Except for the virtually defined desired state $\mathbf{x}_{t}^\text{d}$, any physical existing state $\mathbf{x}_{t+1}$ during this process is constrained by its task environment as follows:
\begin{equation}
\label{non-penetration}
    \forall p_o \in T_{\{OE\},t+1}\mathcal{P}_e, \Gamma(x_{p_o},y_{p_o},z_{p_o} )\neq-1
\end{equation}
We do not explicitly predict or trace the trajectory of the peg during this process but only observe its steady state at $\mathbf{x}_{t}$ and $\mathbf{x}_{t+1}$. 
At the steady state $\mathbf{x}_{t+1}$, any external contact wrench $\mathbf{F}_\text{ext}\in \mathbb{R}^6$ in frame $\{O\}$ is balanced by the pose deviation as:
\begin{equation}
\label{ext_force}
    \mathbf{F}_\text{ext} = \mathbf{K}_d (\mathbf{x}^{\text{d}}_t - \mathbf{x}_{t+1})
\end{equation}
while the velocity $\dot{\mathbf{x}}_{t+1}=0$ of the peg remains zero.

We use the intersection point set $\mathcal{P}_{o, \text{footprint},t}\subset \mathbb{R}^3$ between the peg's edges and the $XY$-plane of $\{O \}$ at the steady pose $\mathbf{x}_{t}$ to describe the outcome of each interaction as:
\begin{equation}
   \mathcal{P}_{o, \text{footprint}} = \{ p_o \in T_{\{OE\},t}\mathcal{P}_e\mid \forall p_o=(x,y,z) , z=0 \}
\end{equation}
$\mathcal{P}_{o, \text{footprint}}$ reflects the spatial relation between the peg and the task board.

\subsection{Problem Statement}
\label{pb_state}
We formulate the peg-in-hole assembly task as a progressive uncertainty elimination process in the state space of perception and execution.
We aim to plan interactions $\mathbf{c}_{t}=(\mathbf{x}_{t}, \mathbf{x}_{t}^{\text{d}})$ between the peg and its task environment to 1) iteratively reduce the uncertainty of the perception result of the target hole and 2) finish the peg insertion under execution and perception uncertainty. An overview of the peg-in-hole assembly process is outlined in Alg.~\ref{alg:two_stage}.

\textbf{Perception State Space:} We use $\widetilde{T}_{\{OH\}}$ to denote the estimated state of the ground truth $T_{\{OH\}}$. The estimation result at time $t$ is represented with a probabilistic density function $P_t(\widetilde{T}_{\{OH\}})$ bounded by a proposal region $\mathcal{X}_t\subset \text{SE(3)}$, which is defined as follows:
\begin{equation}
\label{prob_vf}
P_t(\widetilde{T}_{\{OH\}}) = \left\{
             \begin{array}{lr}
             \frac{1}{|\mathcal{X}_t|}, & \widetilde{T}_{\{OH\}} \in \mathcal{X}_t \\
            0, & \widetilde{T}_{\{OH\}} \notin \mathcal{X}_t
\end{array}
\right.
\end{equation}
where $|\mathcal{X}_t|$ denotes the volume of region $\mathcal{X}_t$. 

We aim to form a recursive sequence of proposal regions based on the outcomes $\mathcal{P}_{o, \text{footprint},t}$ of each interaction $\mathbf{c}_t$:
\begin{equation}
\label{virtual_funnel}
\mathcal{X}_0 \supseteq \mathcal{X}_1 \supseteq \mathcal{X}_2 \supseteq \dots \supseteq \mathcal{X}_{T1} 
\end{equation}
As $\mathcal{X}_{t}$ shrinks over steps, the expected spread of $\widetilde{T}_{\{OH\}}$ decreases and the uncertainty range of the perceived hole's state is reduced.

\textbf{Execution Task Space:} Let $\Delta \mathbf{x}\in\mathbb{R}$ be the deviation between the steady state $\mathbf{x}_{t}$ and the peg-in-hole state $\mathbf{x}^*$. Based on the estimated state distribution of $P_{T1}(\widetilde{T}_{OH})$, we aim to shrink $\Delta \mathbf{x}$ at each step of interaction $\mathbf{c}_t$ to progressively reach the peg-in-hole configuration as follows:
\small
\begin{equation}
\label{physical_f}
    \Delta \mathbf{x}_{0}  > \Delta \mathbf{x}_{1} > \Delta \mathbf{x}_{2} > \dots > \Delta \mathbf{x}_{T2}
\end{equation}
\normalsize
We consider a successful peg-in-hole assembly when $\Delta \mathbf{x}$ reaches zero.

\begin{algorithm}
\caption{Peg-in-Hole Assembly}
\renewcommand{\algorithmicrequire}{\textbf{Input:}}
\renewcommand{\algorithmicensure}{\textbf{Output:}}
\begin{algorithmic}[1]
\label{alg:two_stage}
\REQUIRE base polygon $\mathcal{V}_{h}$, initial area of interest $\mathcal{A}_\text{board}$, peg geometry as $\mathcal{P}_e$, world frame $\{O\}$
\STATE $t\gets 0$ \COMMENT{Initialize time step for perception}
\STATE $\mathcal{X}_{T1}\gets$ InteractivePerception($\mathcal{V}_{h}, \mathcal{A}_\text{board}, \mathcal{P}_e, \{O\}$) \COMMENT{Alg.~\ref{interactive_p}}
\STATE $t\gets 0$ \COMMENT{Reset time step for execution}

\STATE $\mathbf{x}_{t} \gets$ CornerAlignment($\mathcal{X}_{T1}$) \COMMENT{Alg.~\ref{alg:alignment}}
\WHILE{not reach $\mathbf{x}^*$}   
\STATE $\mathbf{x}_{t}^{\text{d}} \gets$ InsertionPlanning($\mathbf{x}_{t}$) \COMMENT{Alg.~\ref{alg:insertion}}
\STATE $\mathbf{x}_{t+1}\gets \mathbf{c}_t(\mathbf{x}_t, \mathbf{x}_{t}^{\text{d}})$
\STATE $t\gets t+1$
\ENDWHILE

\end{algorithmic}
\end{algorithm}

\section{Funnel-based Manipulation Planning}
\label{funnel_based_planning}

A manipulation funnel represents a convergence process by iterative contractions from a larger entrance to a smaller exit state space, ensured by task-specific interactions. A general funnel formulation is defined as \textit{Definition}~\ref{funnel_form}.
\begin{definition}
\label{funnel_form}
Let $S$ be a task-relevant space that could represent any system parameters. Given a target state $\mathbf{s}^*$, we aim to find subspaces $S_{\text{in}} \subset S$ so that after applying some state transition functions, denoted as $\Pi: S\rightarrow S$, the resultant output space $S_{\text{out}} \subset S$ is guaranteed to be strictly smaller than the input space: $|S_{\text{in}}|> |S_{\text{out}}|$. If these conditions are satisfied , $\mathbf{s}^*\in S_\text{in}$ and $\mathbf{s}^*\in S_\text{out}$, a tuple $\mathcal{F} = (S_\text{in}, \Pi, S_\text{out})$ is called a manipulation funnel.

\end{definition}

We first define the task-specific interactions based on the task mechanics in Section~\ref{interaction}. Then,  we introduce the formal approach to construct manipulation funnels in perception state space (Section~\ref{sec:vf}) and execution task space (Section~\ref{sec:pf}).

\subsection{Task Mechanics and Interaction Primitives}
\label{interaction}
Funnel-based manipulation planning requires deep exploitation of the intrinsic task mechanics~\cite{mason1985mechanics}. As perception and physical uncertainties are inevitable~\cite{rodriguez2021unstable}, our key insight is forming an aligned \textit{corner} between the inclined peg and the target hole to create contact constraints for undesired motion freedoms and progressively enter the allowed clearance (as illustrated in Fig.~\ref{fig:mechanics}-(a)). A \textit{corner} from the hole is defined by arbitrary vertex $v_h^\text{corner}\in S(\mathcal{V}_h)$ as the local angle $\angle v_h^\text{corner}$ formed by $v_h^\text{corner}$ and its nearby edges from the interior side. In this paper, we use the case of $\angle v_h^\text{corner} < \pi$ for motion funnel construction. However, theoretically, it also applies to the case of $\angle v_h^\text{corner} > \pi$ as long as the local convexity exists. 

\begin{figure}[H]
    \centering
    \includegraphics[width=\linewidth]{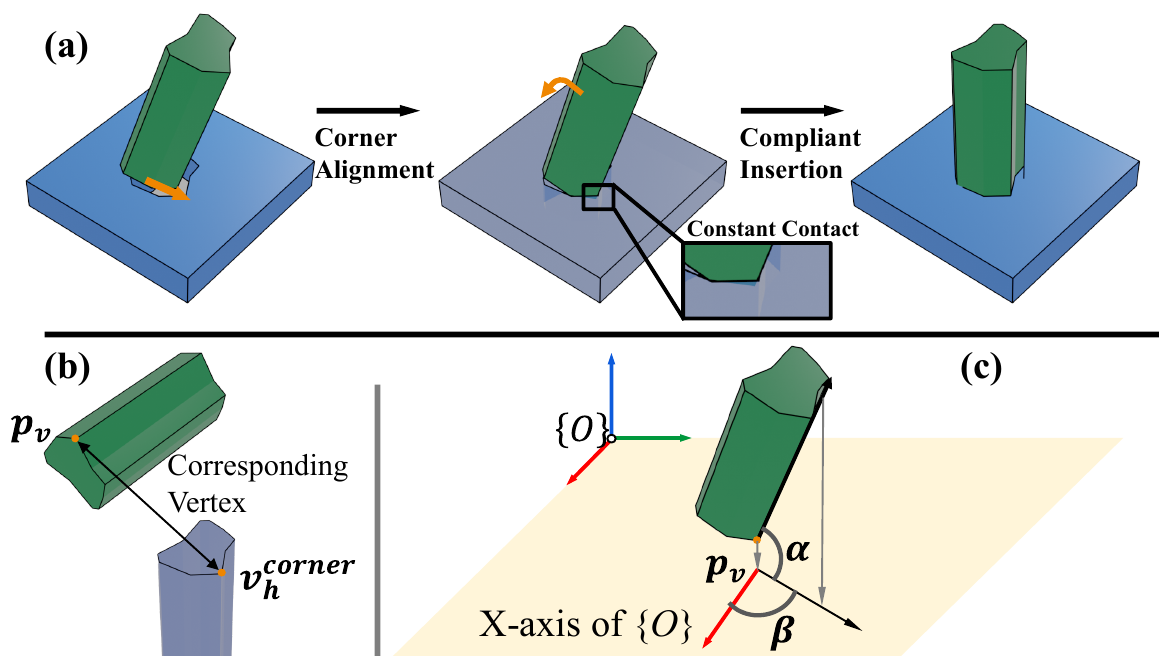}
    \caption{(a) The task mechanics of peg-in-hole insertion: first, constant contact between the peg and the hole is formed; second, the formed constraints are leveraged to refine the insertion motions. (b) A paired corner on the peg and hole: this local geometry enables the downstream iterative insertion process. (c) The inclined state is constrained by its supporting vertex $p_v$, inclined angle $\alpha$ and rotation angle $\beta$.}
    \label{fig:mechanics}
\end{figure}

Despite the trajectory being a dominant action representation in manipulation planning, it is unsuitable for funnel-based manipulations as interactions with the task environment are allowed to alter the motion of the manipulator \cite{mason1985mechanics}.  Given the target corner $v_{h}^\text{corner}$ of the hole, its corresponding vertex on the base of the peg is denoted as $p_v=v_{e}^\text{corner}\in S(\mathcal{V}_e)\subset \mathcal{P}_e$, as shown in Fig.~\ref{fig:mechanics}-(b). To integrate the alignment mechanism as an inductive bias for interactions, the task-specific interaction $\mathbf{c}_t=(\mathbf{x}^{d}_t, \mathbf{x}_t)$ is extended as $\mathbf{c}^v_t=(\mathbf{x}^{d}_t, \mathbf{x}_t, p_v)$ in \textit{Definition}~\ref{inclined_state}.

\begin{definition}
\label{inclined_state}
For a task-specific interaction $\mathbf{c}_t^v$, the starting state $\mathbf{x}_t$ and desired state $\mathbf{x}_t^{d}$ is constrained by a common supporting vertex $p_v$ defined as follows: 
\begin{equation}
\begin{aligned}
\label{pv_vertex}
       \forall \mathbf{x}\in [\mathbf{x}^{d}_t, \mathbf{x}_t]:  \forall p_o, p_o\in T_{\{OE\}}\mathcal{P}_e \land p_o\neq T_{\{OE\}}p_v \\, z_{p_o} > z_{T_{\{OE\}}p_v} 
\end{aligned}
\end{equation}  
Such a formulation ensures that the vertex $p_v$ on the peg base possesses the lowest value along the $Z$-axis of $\{O\}$, while the interaction process is unchanged as $\mathbf{x}_{t+1}=\Pi(\mathbf{x}_{t}, \mathbf{x}_{t}^\text{d})$.
\end{definition}

An inclined state is visualized in Fig.~\ref{fig:mechanics}-(c). Let $\mathbf{n}$ represents the positive direction of $Z$-axis of $\{E\}$ with reference to $\{O\}$ at the inclined state, we use $\alpha$ as the inclined angle and $\beta$ as the rotation angle to define $\mathbf{n}$ as follows:
\begin{equation}
\label{n_vertex}
    \mathbf{n} = \begin{bmatrix}
\cos(\alpha) \cos(\beta) \\
\cos(\alpha) \sin(\beta) \\
\sin(\alpha)
\end{bmatrix}
\end{equation}

Interaction with inclined states is designed to identify and exploit its environmental contact constraints. 
Such a formulation offers several advantages: 1) when the peg engages with the task board from above (as illustrated in Fig.~\ref{fig:env_id}), the observation $\mathcal{P}_{o, \text{footprint},t}$ is divided into binary categories based on the cardinality of the set $|\mathcal{P}_{o, \text{footprint},t}|$: whether it is a contact point or an intersection area between the peg and hole; 2) an inclined state relaxes the requirement for a peg to be partially entered into the hole~\cite{lozano1984automatic}; 3) before an insertion task is finished (considered vertical w.r.t. the board surface plane), all peg states can be described as inclined states during the insertion process.

\begin{figure}
    \centering
    \includegraphics[width=\linewidth]{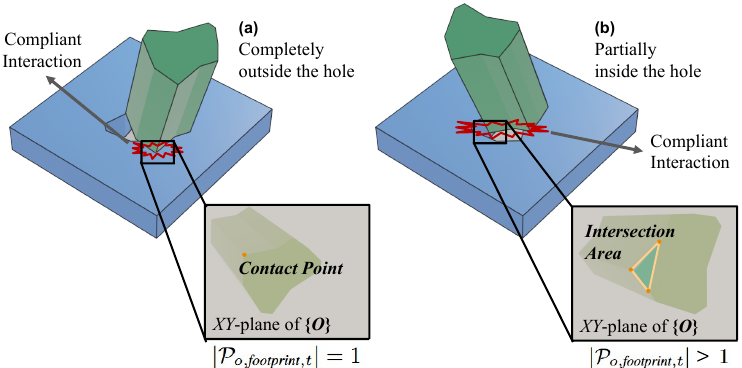}
    \caption{The observation $\mathcal{P}_{o,\text{footprint},t}$ from $\mathbf{c}_t^v$ can be divided into binary categories, (a) contact point when $|\mathcal{P}_{o,\text{footprint},t}|=1$ or (b) intersection area when $|\mathcal{P}_{o,\text{footprint},t}|>1$, based on its cardinality.}
    \label{fig:env_id}
\vspace{-0.4cm}
\end{figure}

\subsection{Perception Manipulation Funnel}
\label{sec:vf}
The perception manipulation funnel aims to iteratively reduce the proposal region $\mathcal{X}_t$ of the target hole's distribution $P_t(\widetilde{T}_{\{OH\}})$ through observation from each interaction. By strategically placing the supporting vertex $p_v$ of the desired state $\mathbf{x}_t^{\text{d}}$ beneath the $XY$-plane of $\{O\}$, we can intentionally capture the encountered contact constraints reflected on the resulted steady state $\mathbf{x}_{t+1}$. As illustrated in Fig.~\ref{fig:env_id}, after each interaction, the observation $\mathcal{P}_{o, \text{footprint},t}$ can be interpreted as an inequality constraint as defined in \textit{Definition}~\ref{ineq} to limit the distribution of $\widetilde{T}_{\{OH\}}$.

\begin{definition}
\label{ineq}
An inequality constraint on the possible space of $\widetilde{T}_{\{OH\}}$ is defined as $g_t(\widetilde{T}_{\{OH\}}) \geq 0$ based on the observation $\mathcal{P}_{o, \text{footprint},t}$, where $g_t(\widetilde{T}_{\{OH\}})$ is defined as:
\begin{equation}
\begin{aligned}
 \forall p_o\in& \mathcal{P}_{o, \text{footprint},t}, \\
 &g_t(\widetilde{T}_{\{OH\}}) = \left\{
             \begin{array}{lr}
            f_{\widetilde{T}_{\{OH\}}\mathcal{A}_h}(p_o), |\mathcal{P}_{o,\text{footprint},t}|=1& \\
           -\max f_{\widetilde{T}_{\{OH\}}\mathcal{A}_h}(p_o), \text{Otherwise} &
\end{array}
\right.
\end{aligned}
\end{equation}
in which $|\cdot|$ is the cardinality of the set. For a potential hole's state $\widetilde{T}_{\{OH\}}$, any contact point (as $ |\mathcal{P}_{o,\text{footprint},t}|=1$) should be excluded from the hole's area, while any intersection area (as $ |\mathcal{P}_{o,\text{footprint},t}|>1$) should be fully contained by the hole's area.
\end{definition}

The proposal region at $t=0$ is derived from the initial area of interest $\mathcal{A}_{\text{board}}$ as follows:
\begin{equation}
\label{eq:init_x0}
    \mathcal{X}_0 = \{ \widetilde{T}_{\{OH\}}\mid \widetilde{T}_{\{OH\}}\mathcal{A}_h \subset \mathcal{A}_\text{board}   \}
\end{equation}

The proposal region at step \( t \) is defined as:
\begin{equation}
\label{region_t}
\mathcal{X}_t = \{\widetilde{T}_{\{OH\}} \in \mathcal{X}_0\subset SE(3) \mid  g_i(\widetilde{T}_{\{OH\}}) \geq 0, \, i=1, 2,..., t\}    
\end{equation}  
which is updated at step \( t+1\) with the new added constraint $g_{t+1}(\widetilde{T}_{\{OH\}}) \geq 0$ as:
\begin{equation}
\label{eq:region_update}
\mathcal{X}_{t+1} = \mathcal{X}_{t} \cap \{\widetilde{T}_{\{OH\}} \in SE(3) \mid g_{t+1}(\widetilde{T}_{\{OH\}}) \geq 0 \}
\end{equation}

\begin{lemma}
\label{fs_region}
As defined in Eq.~\eqref{eq:region_update}, the volume $|\mathcal{X}_t|$ of the feasible region $\mathcal{X}_t$ is monotonically decreasing ($\mathcal{X}_{t+1}\subseteq \mathcal{X}_{t}$) over nonidentical interactions $\mathbf{c}_t^v$.
\end{lemma}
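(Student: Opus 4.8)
The plan is to split the claim into its non-strict and strict parts. The non-strict monotonicity is immediate from the update rule~\eqref{eq:region_update}: since $\mathcal{X}_{t+1}$ is defined as $\mathcal{X}_t$ intersected with the constraint set $\{\widetilde{T}_{\{OH\}} \in SE(3) \mid g_{t+1}(\widetilde{T}_{\{OH\}}) \ge 0\}$, we have $\mathcal{X}_{t+1} \subseteq \mathcal{X}_t$ by the elementary fact that $A \cap B \subseteq A$. Applying this one-step inclusion inductively recovers the full nested chain $\mathcal{X}_0 \supseteq \mathcal{X}_1 \supseteq \cdots$ of Eq.~\eqref{virtual_funnel}, and monotonicity of the volume measure on $SE(3)$ then yields $|\mathcal{X}_{t+1}| \le |\mathcal{X}_t|$. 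This already establishes the stated inclusion $\mathcal{X}_{t+1} \subseteq \mathcal{X}_t$ verbatim, so if the lemma is read only as set-nesting the argument is essentially complete at this point.

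To justify the word \emph{decreasing} in a strict sense over \emph{nonidentical} interactions, I would next show that the newly added constraint is not redundant, i.e., that $\{g_{t+1} \ge 0\}$ removes a subset of $\mathcal{X}_t$ of positive measure. First I would fix the interpretation of ``nonidentical'': an interaction $\mathbf{c}_t^v$ is nonidentical to its predecessors when its observed footprint $\mathcal{P}_{o,\text{footprint},t}$ induces a constraint function $g_{t+1}$ distinct from $g_1,\dots,g_t$ (a contact point at a previously unseen location, or an intersection region of different support). I would then invoke the continuity of the implicit distance function $f_{\widetilde{T}_{\{OH\}}\mathcal{A}_h}$ in $\widetilde{T}_{\{OH\}}$ from Def.~\ref{ineq}, so that $\{g_{t+1} < 0\}$ is open in $SE(3)$. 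The remaining task is to exhibit at least one hypothesis $\widetilde{T}_{\{OH\}} \in \mathcal{X}_t$ with $g_{t+1}(\widetilde{T}_{\{OH\}}) < 0$; by openness this point carries a neighborhood of positive measure inside $\mathcal{X}_t$ that~\eqref{eq:region_update} discards, giving $|\mathcal{X}_{t+1}| < |\mathcal{X}_t|$.

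I would also record the companion fact that the funnel never discards the truth: because the footprint is generated by the real board--hole geometry, the ground-truth pose $T_{\{OH\}}$ satisfies every observation-induced constraint (a contact point lies outside $T_{\{OH\}}\mathcal{A}_h$, an intersection region lies inside it), so $T_{\{OH\}} \in \mathcal{X}_t$ for all $t$ and each $\mathcal{X}_t$ remains nonempty. This guarantees the shrinkage carves away only false hypotheses and cannot collapse the region to the empty set prematurely, which is what makes the strict-decrease statement meaningful rather than vacuous.

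The main obstacle is precisely this strict-decrease step, specifically ruling out the degenerate case where a nonidentical interaction touches $\mathcal{X}_t$ only along the measure-zero surface $\{g_{t+1} = 0\}$ (a tangency), excluding nothing of positive volume. Handling this cleanly requires either strengthening the definition of ``nonidentical'' to demand that the zero-level surface of the new constraint meets the \emph{interior} of $\mathcal{X}_t$ transversally, or a genericity argument showing that such tangential constraints are nongeneric among admissible inclined interactions $\mathbf{c}_t^v$. I expect the bulk of the rigor to concentrate here; by comparison, the set-intersection inclusion and the consistency of $T_{\{OH\}}$ are routine.
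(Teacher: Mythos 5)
Your core argument is exactly the paper's: the paper's proof is a one-liner observing that $\mathcal{X}_{t+1}$ is the intersection of $\mathcal{X}_t$ with the new constraint set, hence $\mathcal{X}_{t+1}\subseteq\mathcal{X}_t$ and the volume ``is decreasing or stays the same.'' Your first paragraph reproduces this, so the lemma as parenthetically clarified ($\mathcal{X}_{t+1}\subseteq\mathcal{X}_t$, i.e.\ weak monotonicity) is already fully proved at that point, in the same way the paper proves it. Where you diverge is in scope: you read ``monotonically decreasing'' as potentially strict and correctly diagnose that strictness would require a nonredundancy/transversality argument ruling out tangential constraints --- a gap you flag but do not close. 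The paper never attempts this; its proof explicitly allows the volume to stay the same, so your obstacle is real but concerns a stronger statement than the one the paper actually establishes. Note also that your third paragraph (the ground truth $T_{\{OH\}}$ satisfies every observation-induced constraint, so $\mathcal{X}_t$ never discards the truth) is not part of this lemma at all in the paper; it is stated and proved separately as Lemma~\ref{vf_approach}, and then both lemmas are combined in Theorem~\ref{pf_conv}. So your proposal is correct, contains the paper's proof verbatim in spirit, and is more candid than the paper about what the ``nonidentical interactions'' hypothesis does and does not buy.
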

\begin{proof}
Since $\mathcal{X}_{t+1}$ is the intersection of $\mathcal{X}_{t}$ with another constraint set $g_{t+1}(\widetilde{T}_{\{OH\}})\geq 0$ and $\{g_{t}(\widetilde{T}_{\{OH\}})\geq 0\} \neq \{g_{t+1}(\widetilde{T}_{\{OH\}})\geq 0\}$ under nonidentical interactions,  $|\mathcal{X}_t|$ is decreasing or stays the same when new constraints are added.
\end{proof}
As $\mathcal{X}_t$ shrinks over steps in \textit{Lemma}~\ref{fs_region}, the expected spread of $\widetilde{T}_{\{OH\}}$ decreases, and the allowable state space of the target hole narrows.

\begin{lemma}
\label{vf_approach}
The ground truth state of the hole is always included in the allowable state space as $T_{\{OH\}}\in \mathcal{X}_t$ over interactions $\mathbf{c}_t^v$.
\end{lemma}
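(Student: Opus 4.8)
The plan is to prove the invariant by induction on the interaction index $t$, showing that the ground-truth pose $T_{\{OH\}}$ is never eliminated from $\mathcal{X}_t$ because it satisfies every inequality constraint $g_i(\widetilde{T}_{\{OH\}})\geq 0$ that is appended to the proposal region. This is the natural complement to \textit{Lemma}~\ref{fs_region}: that result guarantees the region shrinks, whereas here I must guarantee it never shrinks past the true pose, so that the funnel is correct rather than merely contracting.

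For the base case I would invoke the standing assumption introduced with the task environment, namely that $\mathcal{A}_\text{board}$ is chosen so that $T_{\{OH\}}\mathcal{A}_h\subset\mathcal{A}_\text{board}$; comparing this directly with the definition of $\mathcal{X}_0$ in Eq.~\eqref{eq:init_x0} yields $T_{\{OH\}}\in\mathcal{X}_0$ at once. For the inductive step, assuming $T_{\{OH\}}\in\mathcal{X}_t$, the update rule Eq.~\eqref{eq:region_update} reduces the goal to verifying the single new constraint $g_{t+1}(T_{\{OH\}})\geq 0$, since intersection preserves membership. The crux is that the footprint $\mathcal{P}_{o,\text{footprint},t}$ is not an arbitrary set but is physically produced by the inclined peg contacting the real board, whose geometry is exactly the occupancy field $\Gamma$ instantiated at $T_{\{OH\}}$ (Eq.~\eqref{task_env}); I would therefore tie each footprint point back to the non-penetration constraint Eq.~\eqref{non-penetration}, which forbids any peg point from entering the solid region $\Gamma=-1$.

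I would then split on cardinality exactly as in \textit{Definition}~\ref{ineq}. When $|\mathcal{P}_{o,\text{footprint},t}|=1$, the inclined state of \textit{Definition}~\ref{inclined_state} forces the peg to rest on its supporting vertex at $z=0$; because that vertex cannot descend below the surface over solid board without violating Eq.~\eqref{non-penetration}, the lone contact point $p_o$ must lie on or outside the true hole boundary, i.e. $f_{T_{\{OH\}}\mathcal{A}_h}(p_o)\geq 0$, which is precisely $g_{t+1}(T_{\{OH\}})\geq 0$. When $|\mathcal{P}_{o,\text{footprint},t}|>1$, part of the peg has passed below $z=0$, and every surface-crossing point must be entering free space, which under $\Gamma$ is possible only inside the hole; hence each $p_o$ satisfies $f_{T_{\{OH\}}\mathcal{A}_h}(p_o)\leq 0$, so $\max_{p_o} f_{T_{\{OH\}}\mathcal{A}_h}(p_o)\leq 0$ and therefore $g_{t+1}(T_{\{OH\}})=-\max_{p_o} f_{T_{\{OH\}}\mathcal{A}_h}(p_o)\geq 0$. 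In both cases the true pose survives the update, giving $T_{\{OH\}}\in\mathcal{X}_{t+1}$ and closing the induction.

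I expect the main obstacle to be making this geometric case analysis fully rigorous, rather than the inductive scaffolding, which is routine. Specifically, I must argue carefully \emph{why} a single footprint point forces exterior containment while an area footprint forces interior containment, and this requires relating which peg edge points attain $z=0$ at the steady inclined state to the sign of $f_{T_{\{OH\}}\mathcal{A}_h}$ through $\Gamma$, rather than appealing to physical intuition. A subtle point worth handling explicitly is the boundary case $f=0$ (contact exactly on the rim), which must be admissible in both branches so that the weak inequalities $g_{t+1}(T_{\{OH\}})\geq 0$ hold with equality rather than being violated.
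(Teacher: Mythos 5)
Your proof is correct and shares its skeleton with the paper's: $T_{\{OH\}}\in\mathcal{X}_0$ by the standing assumption $T_{\{OH\}}\mathcal{A}_h\subset\mathcal{A}_\text{board}$ together with Eq.~\eqref{eq:init_x0}, and the truth survives every update of Eq.~\eqref{eq:region_update} because it satisfies each newly added constraint. The difference is one of depth rather than route: the paper's proof is a two-line assertion that simply \emph{states} $T_{\{OH\}}\in\{\widetilde{T}_{\{OH\}}\mid g_t(\widetilde{T}_{\{OH\}})\geq 0\}$ and concludes via Eq.~\eqref{region_t}, whereas you actually derive this membership from the task mechanics. Your case analysis on the footprint cardinality of \textit{Definition}~\ref{ineq} --- a steady singleton contact under downward drive requires solid support beneath it, so the occupancy function of Eq.~\eqref{task_env} forces $f_{T_{\{OH\}}\mathcal{A}_h}(p_o)\geq 0$; an area footprint means peg edges cross below the surface, where Eq.~\eqref{non-penetration} plus continuity of $f_{T_{\{OH\}}\mathcal{A}_h}$ forces $f_{T_{\{OH\}}\mathcal{A}_h}(p_o)\leq 0$ at every crossing point --- is precisely the content the lemma rests on and that the paper leaves implicit: the constraints cannot exclude the truth \emph{because} the observations are physically generated by the true hole. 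So your version is the more complete argument, and the inductive packaging versus the paper's direct intersection reading is immaterial. One small point to tighten: in the singleton branch, the operative claim is not that the vertex ``cannot descend over solid board'' but that a steady single-point contact at $z=0$ over the open hole is impossible (nothing supports it, so it would descend and produce an area footprint); you gesture at this and flag it yourself, but it is the step that needs the explicit equilibrium argument.
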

\begin{proof}
As $T_{\{OH\}} \in \mathcal{X}_0$ and $T_{\{OH\}}\in \{\widetilde{T}_{\{OH\}}\mid g_t(\widetilde{T}_{\{OH\}})\geq0 \}$, thus proven $T_{\{OH\}} \in \mathcal{X}_t$ as defined in Eq.~\eqref{region_t}.
\end{proof}

Since the proposal region $\mathcal{X}_{t+1}\subseteq \mathcal{X}_{t}$ of the target hole is monotonically shrinking over interactions $\mathbf{c}_t^v$, meanwhile the target state $T_{\{OH\}}$ is guaranteed to be contained by $\mathcal{X}_t$ and $\mathcal{X}_{t+1}$, we introduce the perception manipulation funnel as defined in \textit{Theorem}~\ref{pf_conv}.
\begin{theorem}
\label{pf_conv}
The process of proposal region shrinkage $\mathcal{X}_{t+1}\subseteq \mathcal{X}_{t}$ through compliant interactions $\mathbf{c}_t^v$ is a manipulation funnel in the perception state space.
\end{theorem}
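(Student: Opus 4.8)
The plan is to verify that the process $\mathcal{X}_{t+1}\subseteq\mathcal{X}_t$ satisfies each clause of \textit{Definition}~\ref{funnel_form}, instantiated with the task-relevant space $S = \text{SE(3)}$, the target state $\mathbf{s}^* = T_{\{OH\}}$, and the state transition function $\Pi$ being the proposal-region update of Eq.~\eqref{eq:region_update}. The argument is essentially a bookkeeping step that assembles the two preceding lemmas into the funnel template, so I would keep it short and structural rather than computational.

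First I would identify the input and output subspaces: for the transition from step $t$ to step $t+1$, set $S_{\text{in}} = \mathcal{X}_t$ and $S_{\text{out}} = \mathcal{X}_{t+1}$, with $\Pi$ realized by intersecting $\mathcal{X}_t$ with the newly observed constraint $\{g_{t+1}(\widetilde{T}_{\{OH\}})\geq 0\}$. Next I would invoke \textit{Lemma}~\ref{fs_region} to establish the strict-shrinkage requirement $|S_{\text{in}}| > |S_{\text{out}}|$: the lemma gives $\mathcal{X}_{t+1}\subseteq\mathcal{X}_t$ with volume nonincreasing, and under the stated hypothesis of nonidentical interactions $\mathbf{c}_t^v$ the added constraint set differs from the existing ones, so the inclusion is proper and the volume strictly decreases. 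Then I would invoke \textit{Lemma}~\ref{vf_approach} to establish the target-containment conditions $\mathbf{s}^*\in S_{\text{in}}$ and $\mathbf{s}^*\in S_{\text{out}}$: since $T_{\{OH\}}\in\mathcal{X}_t$ for all $t$, the ground-truth hole pose lies in both $\mathcal{X}_t$ and $\mathcal{X}_{t+1}$, matching the funnel definition's requirement that the target survive the transition.

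Having checked all four conditions of \textit{Definition}~\ref{funnel_form}, I would conclude that the tuple $\mathcal{F}_{\text{perc}} = (\mathcal{X}_t,\, \Pi,\, \mathcal{X}_{t+1})$ is a manipulation funnel in the perception state space, which is exactly the claim of \textit{Theorem}~\ref{pf_conv}. The proof is therefore a direct composition: the two lemmas supply precisely the shrinkage and containment guarantees, and the theorem only needs to recognize that these are the defining properties of a funnel.

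The main subtlety I anticipate is the strictness of the volume inequality. \textit{Definition}~\ref{funnel_form} demands $|S_{\text{in}}| > |S_{\text{out}}|$ strictly, whereas \textit{Lemma}~\ref{fs_region} only guarantees monotonic nonincrease ($\mathcal{X}_{t+1}\subseteq\mathcal{X}_t$, possibly with equality). To bridge this gap I would lean on the \emph{nonidentical interactions} hypothesis and argue that a genuinely new footprint observation $\mathcal{P}_{o,\text{footprint},t+1}$ excludes a positive-measure set of candidate poses from $\mathcal{X}_t$, forcing $|\mathcal{X}_{t+1}| < |\mathcal{X}_t|$; if this cannot be guaranteed for every admissible interaction, the cleanest fix is to read the funnel definition's inequality as holding over the full sequence $\mathcal{X}_0\supseteq\cdots\supseteq\mathcal{X}_{T1}$ with at least one strict step, or to restrict attention to interactions whose constraints are not already implied by $\mathcal{X}_t$. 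Making this precise is the one place where the otherwise routine assembly requires care.
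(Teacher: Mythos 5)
Your proposal takes essentially the same route as the paper's own proof, which likewise just composes Lemma~\ref{fs_region} (monotone shrinkage of $\mathcal{X}_t$) and Lemma~\ref{vf_approach} (persistent containment of $T_{\{OH\}}$) and declares that Definition~\ref{funnel_form} is satisfied. The one place you go beyond the paper is the strictness subtlety you flag: the paper's proof silently passes over the fact that Lemma~\ref{fs_region} only yields $\mathcal{X}_{t+1}\subseteq\mathcal{X}_t$ (``decreasing or stays the same'') while Definition~\ref{funnel_form} demands $|S_{\text{in}}|>|S_{\text{out}}|$ strictly, so your proposed repair---restricting to interactions whose constraint is not already implied by $\mathcal{X}_t$, or reading strictness over the full sequence $\mathcal{X}_0\supseteq\cdots\supseteq\mathcal{X}_{T1}$---is a point of care the paper itself omits, not a gap in your argument.
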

\begin{proof}
As proven in \textit{Lemma}~\ref{fs_region} and~\ref{vf_approach}, the proposal region $\mathcal{X}_t$ is shrinking and approaching $T_{\{OH\}}$ over random interactions $\mathbf{c}_t^v$
, thus proving that the perception manipulation funnel is established based on the general \textit{Definition}~\ref{funnel_form}
\end{proof}

Additionally, a maximum entropy-based method is introduced to improve convergence efficiency. An overview is outlined in Alg.~\ref{interactive_p}. The probability distribution $P_t(\widetilde{T}_{\{OH\}})$ of the target hole's state is defined by Eq.~\ref{prob_vf} and the initial area for exploration $\mathcal{A}_\text{board}$ is divided into grid points $\mathcal{G}=\{\mathbf{g}_i \mid i=1, 2, ..., N\}$ where $\mathbf{g}_i\in \mathbb{R}^3$ on the $XY$-plane of $\{O\}$. During each interaction, number $K$ of potential hole's states is sampled as $\{\widetilde{T}_{\{OH\}, i}\}^K_{i=1}\sim P_t(\widetilde{T}_{\{OH\}})$. The probability $P_{\text{in}}$ that $\mathbf{g}$ is included by the target hole is defined by:

\begin{gather}
    P_{\text{in}}(\mathbf{g}) = \frac{\sum^K_{i=1}h(\widetilde{T}_{\{OH\}, i}, \mathbf{g})}{K},\\
    h(\widetilde{T}_{\{OH\}}, \mathbf{g}) =   \left\{
             \begin{array}{lr}
            1, & \mathbf{g}\in \widetilde{T}_{\{OH\}}\mathcal{A}_{h} \\
           0, & \text{Otherwise}
\end{array}
\right. 
\end{gather}
Let $ \mathcal{H}(\cdot)$ represent the entropy of the given probabilistic density function. The expected entropy reduction is formulated as:
\begin{equation}
    \Delta \mathcal{H}(\mathbf{g}) = \mathcal{H}(P_t(\widetilde{T}_{\{OH\}})) - E[\mathcal{H}(P_t(\widetilde{T}_{\{OH\}})\mid \mathbf{g})]
\end{equation}
where $E[\mathcal{H}(P_t(\widetilde{T}_{\{OH\}})\mid \mathbf{g})]$ is the expected entropy after evaluating $\mathbf{g}$.
To maximize information gain, the desired state should spatially overlap with the selected grid point $\mathbf{g}^* \in \mathcal{G}$ that maximizes $\Delta \mathcal{H}(\mathbf{g})$ for contact constraints identification:
\begin{equation}
\label{eq:grid_max}
\mathbf{g}^* = \arg\max_{\mathbf{g} \in \mathcal{G}} \Delta \mathcal{H}(\mathbf{g})
\end{equation}
The convergence of the proposed entropy-based exploration is proven in \textit{Theorem}~\ref{pf_conv}. 

\begin{algorithm}
\caption{Entropy-based Interactive Perception}
\renewcommand{\algorithmicrequire}{\textbf{Input:}}
\renewcommand{\algorithmicensure}{\textbf{Output:}}
\label{interactive_p}
\begin{algorithmic}[1]
\REQUIRE  $\mathcal{V}_{h}$, $\mathcal{A}_\text{board}$, $\mathcal{P}_e$, $\{O\}$
\STATE $t\gets0$
\STATE Sample grid points $\mathcal{G}\gets\mathcal{A}_\text{board}$
\STATE Initialize proposal region $\mathcal{X}_t\gets\mathcal{A}_\text{board}$ \COMMENT{Eq.\eqref{eq:init_x0}}
\WHILE{not converged}  
    \STATE  $\mathbf{g}^*\gets$ Select grid point for interaction  \COMMENT{Eq.\eqref{eq:grid_max}}
    \STATE  $\mathcal{P}_{o,\text{footprint},t}$ $\gets$ Active contact with the selected point
    \STATE  $\mathcal{X}_{t+1}\gets$ ConstraintsUpdate($\mathcal{P}_{o,\text{footprint},t}$)  \COMMENT{Eq.\eqref{eq:region_update}}
    \STATE $t\gets t+1$
\ENDWHILE
\ENSURE bounded region $\mathcal{X}_{T1}$

\end{algorithmic}
\end{algorithm}

\subsection{Physical Manipulation Funnel}
\label{sec:pf}

\begin{figure*}
    \centering
    \includegraphics[width=0.9\textwidth]{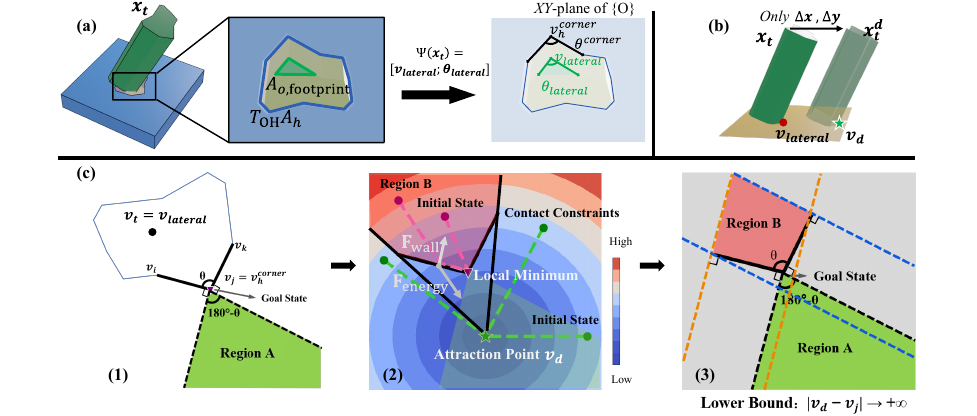}
    \caption{(a) The peg's state $\mathbf{x}_t$ is projected as  $[v_{\text{lateral}}; \theta_\text{lateral}]$. (b) By setting the translation deviation between the desired state $\mathbf{x}^{\text{d}}_t$ and the starting state $\mathbf{x}_t$ only on the $XY$-plane of $\{O\}$, a 2-D potential energy field is formulated. (c) The corner alignment process is formulated as aligning the point $v_t = v_\text{lateral}$ to the corner $v_j=v_h^\text{corner}$ under the drive of the created potential energy field, while the edges are considered non-penetration walls.}
\label{subspace}
\vspace{-0.5cm}
\end{figure*}

The physical manipulation funnel aims to leverage environmental contacts as physical constraints in the execution task space to iteratively reduce $\Delta \mathbf{x}$ for peg-in-hole insertion. Based on the task mechanics as Fig.~\ref{fig:mechanics}-(a), we consider a sequential process as 1) aligning an inclined state to its corresponding corner and 2) adjusting the aligned inclined state to insertion. All the states $\mathbf{x}_t$ described in this section are considered inclined states, which are defined by $p_v$, $\alpha$, and $\beta$ as introduced in Eq.~\eqref{pv_vertex} and~\eqref{n_vertex}. We describe the spatial relationship between the $\mathbf{x}_t$ and $\mathbf{x}^*$ (the target peg-in-hole state) based on the $\mathcal{P}_{o, \text{footprint},t}$ and the target corner $v_h^\text{corner}$. Specifically,
$\mathcal{P}_{o, \text{footprint},t}$ forms a 2-D polygon as $\mathcal{V}_{o, \text{footprint},t}=(p_o^0, p_o^1,...,p_o^k), p_o\in\mathcal{P}_{o, \text{footprint},t}$ with a corresponding area $\mathcal{A}_{o, \text{footprint},t}\subset\mathbb{R}^3$ on the $XY$-plane of $\{O\}$. The state deviation $\Delta \mathbf{x}$ is defined based on $\mathcal{V}_{o, \text{footprint},t}$ and $v_h^\text{corner}$ as \textit{Definition}~\ref{subspace_proj}.

\begin{definition}
\label{subspace_proj}
Given an inclined peg state $\mathbf{x}_t$ with the lateral edge of the supporting vertex $p_v$ intersects with the $XY$-plane of $\{O\}$, we project its state as $\Psi: \mathbb{R}^6\mapsto \mathbb{R}^4$. We use the $v_\text{lateral}\in \mathbb{R}^3$ and the angle value $\theta_\text{lateral}\in\mathbb{R}$ of $\angle v_\text{lateral}$ to describe the pose as $\Psi(\mathbf{x}_t)=[v_{\text{lateral}}; \theta_{\text{lateral}}]$, as illustrated in Fig.~\ref{subspace}-(a). $v_{\text{lateral}}$ is the intersection point between the lateral edge of $p_v$ and the $XY$-plane of $\{O\}$, $\angle v_\text{lateral}$ is the angle formed by $v_{\text{lateral}}$ and its nearby vertices in $\mathcal{V}_{o, \text{footprint},t}$. Given the target corner for alignment $v_h^\text{corner}$ and its corresponding angle $\angle v_h^\text{corner}$, the deviation $\Delta \mathbf{x}$ is transformed into $\Delta \mathbf{s}$ as follows:
\begin{equation}
    \Delta \mathbf{s} = \overset{\text{translation deviation:}\Delta v}{\left\|v_{\text{lateral}}-T_{\{OH\}}v_h^{\text{corner}} \right\|} +  \overset{\text{inclined deviation:}\Delta \theta}{|\theta^\text{corner}-\theta_\text{lateral}|}
\end{equation}
where $\theta^\text{corner}, \theta_\text{lateral}$ is the angle value of $\angle v_h^\text{corner}$ and $\angle v_\text{lateral}$ respectively. $\Delta \mathbf{x}\rightarrow0$ is considered equal to $\Delta \mathbf{s}\rightarrow 0$.
\end{definition}

Since the footprint fully contains the projection of the intersection part beneath the board surface plane, the environmental constraints as defined in Eq.~\ref{non-penetration} are projected into the $XY$-plane of $\{O\}$ as $\mathcal{A}_{o, \text{footprint},t}\subset T_{\{OH\}}\mathcal{A}_h$. 

In the following, we describe the process of inclined corner alignment as $\Delta v$ reaches zero and the process from inclined to insertion as $\Delta \theta$ reaches zero. Since each degree of freedom in the Cartesian space of $\{O\}$ is decoupled as $\mathbf{K}_d$ is a diagonal stiffness matrix, we aim to leverage the potential caused by translation deviation for corner alignment and rotational deviation for insertion based on \textit{Lemma}~\ref{energy_f_l}. 

\begin{lemma}
\label{energy_f_l}
Under the quasi-static assumption, the motion of the peg $\mathbf{x}$ is considered driven by a potential energy field:
\begin{equation}
\label{energy_peg}
U(\mathbf{x}) = \frac{1}{2} (\mathbf{x}_d - \mathbf{x})^\top \mathbf{K}_d (\mathbf{x}_d - \mathbf{x})   
\end{equation}
Given an interaction $\mathbf{c}_t^v=(\mathbf{x}_t, \mathbf{x}^{\text{d}}_t)$, the resulted steady pose $\mathbf{x}_{t+1}$ automatically falls into its nearby local minimum energy under the non-penetration environmental constraints as $U(\mathbf{x}_t)\geq U(\mathbf{x}_{t+1}) \geq U(\mathbf{x}_t^{\text{d}})$ .
\end{lemma}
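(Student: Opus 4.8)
The plan is to prove the two claims in turn, verifying first that $U$ in Eq.~\eqref{energy_peg} is the conservative potential generated by the impedance controller. Writing the spring term of the impedance wrench from Eq.~\eqref{impedance} and differentiating the quadratic form, using that $\mathbf{K}_d$ is diagonal hence symmetric, gives
\[
\nabla_{\mathbf{x}} U(\mathbf{x}) = -\mathbf{K}_d\,(\mathbf{x}^{\text{d}}_t - \mathbf{x}),
\]
so the elastic part of the commanded wrench equals $-\nabla_{\mathbf{x}} U$, while the remaining term $-\mathbf{D}_d\dot{\mathbf{x}}$ is purely dissipative. This identifies $U$ as the potential energy field driving $\mathbf{x}$ and sets up an energy-descent (Lyapunov) argument for the sandwiched inequality.

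Next, for the upper bound $U(\mathbf{x}_t)\geq U(\mathbf{x}_{t+1})$, I would track the total mechanical energy $E = \frac{1}{2}\dot{\mathbf{x}}^\top M \dot{\mathbf{x}} + U(\mathbf{x})$, with $M$ the effective inertia of the peg. Differentiating along the closed-loop motion and substituting the force balance (elastic wrench $-\nabla U$, damping $-\mathbf{D}_d\dot{\mathbf{x}}$, and the contact reaction $\mathbf{R}$ supplied by the non-penetration constraint of Eq.~\eqref{non-penetration}) makes the potential terms cancel and leaves $\dot{E} = -\dot{\mathbf{x}}^\top \mathbf{D}_d \dot{\mathbf{x}} + \mathbf{R}^\top \dot{\mathbf{x}}$. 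The first term is nonpositive since $\mathbf{D}_d$ is positive definite, and the key physical fact is that the unilateral contact reaction does no positive work, $\mathbf{R}^\top\dot{\mathbf{x}}\leq 0$, so $\dot{E}\leq 0$. Because $\mathbf{x}_t$ and $\mathbf{x}_{t+1}$ are both steady states with $\dot{\mathbf{x}}=0$ (and the quasi-static assumption renders kinetic energy negligible throughout), the kinetic terms vanish at both endpoints and monotone descent of $E$ collapses to $U(\mathbf{x}_t)\geq U(\mathbf{x}_{t+1})$.

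The lower bound $U(\mathbf{x}_{t+1})\geq U(\mathbf{x}^{\text{d}}_t)$ is then immediate: since $\mathbf{K}_d$ has positive diagonal entries it is positive definite, so $U\geq 0$ with unique global minimum $U(\mathbf{x}^{\text{d}}_t)=0$, whence $U(\mathbf{x}_{t+1})\geq 0 = U(\mathbf{x}^{\text{d}}_t)$. To certify that $\mathbf{x}_{t+1}$ is a local minimum over the feasible set, I would read the steady-state balance Eq.~\eqref{ext_force} as the first-order (KKT) stationarity condition: $\nabla U(\mathbf{x}_{t+1}) = -\mathbf{K}_d(\mathbf{x}^{\text{d}}_t-\mathbf{x}_{t+1})$ is exactly cancelled by the contact reaction, so $\mathbf{x}_{t+1}$ is a constrained stationary point, and the damping-driven descent selects a minimizer rather than a saddle.

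I expect the main obstacle to be making the contact-work term $\mathbf{R}^\top\dot{\mathbf{x}}\leq 0$ rigorous. For a frictionless unilateral constraint the reaction is normal to the admissible surface and the work vanishes, but along the constraint boundary or with friction one must argue that the reaction opposes the feasible motion so its work stays nonpositive; this is precisely where the quasi-static assumption earns its keep, letting us replace the true dynamics by a sequence of equilibria that decrease $U$ monotonically. A secondary subtlety worth flagging is that the guaranteed minimum is only local: the unconstrained minimizer $\mathbf{x}^{\text{d}}_t$ is deliberately placed beneath the board surface and is therefore infeasible, so the peg settles at the nearby constrained local minimum reached from $\mathbf{x}_t$, which is exactly the behavior the lemma asserts.
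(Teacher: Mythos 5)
Your proof is correct and reaches the same conclusion as the paper, but by a genuinely more formal route: the paper's entire proof is two sentences of physical assertion (the peg tends to rest at a lower energy state, and cannot move to a higher energy state without external force), with no energy bookkeeping, no treatment of contact work, and no explicit handling of the lower bound $U(\mathbf{x}_{t+1})\geq U(\mathbf{x}_t^{\text{d}})$. You instead run a Lyapunov-style dissipation argument: identify $U$ as the potential of the spring term in Eq.~\eqref{impedance}, show $\dot{E}=-\dot{\mathbf{x}}^\top\mathbf{D}_d\dot{\mathbf{x}}+\mathbf{R}^\top\dot{\mathbf{x}}\leq 0$ for the total mechanical energy, and collapse this to the potential comparison at the two zero-velocity endpoints; the lower bound then follows trivially from positive definiteness of $\mathbf{K}_d$ (so $U(\mathbf{x}_t^{\text{d}})=0$ is the global minimum), and Eq.~\eqref{ext_force} is read as KKT stationarity at the constrained rest point. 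What your version buys is that it makes explicit the two hypotheses the paper's informal claim silently relies on---passivity of the damping term and nonpositive work done by the unilateral contact reactions---and it correctly flags friction as the place where the latter could fail, which is consistent with the paper's own admission in its Limitations section that friction is not modeled. What the paper's version buys is brevity and consistency with its quasi-static narrative, at the cost of leaving those assumptions implicit. One caveat on your final step: the claim that damping-driven descent selects a minimizer rather than a saddle is itself only a generic physical argument, not a proof, but it is no weaker than what the paper offers for the same point.
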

\begin{proof}
The peg tends to rest at a lower energy state in the potential energy field. Also, the peg cannot move to a higher energy state without external force.
\end{proof}

\textbf{Inclined Corner Alignment:} By constraining the deviation between the starting state $\mathbf{x}_t$ and desired state $\mathbf{x}_t^{\text{d}}$ only as planar translation in the $XY$-plane of $\{O\}$ (as illustrated in Fig.~\ref{subspace}-(b)), the applied external force to the peg defined by Eq.~\eqref{ext_force} can be projected into the $XY$-plane as follows:
\begin{equation}
    \mathbf{F}_\text{ext}^\text{planar} = \mathbf{K}_d [\Delta x, \Delta y, 0, 0, 0, 0]^\text{T}
\end{equation}
The formulation of $\mathbf{F}_\text{ext}^\text{planar}$ can be rewritten into a potential energy field based on the desired point $v_d\in\mathbb{R}^3$ and any point $v\in\mathbb{R}^3$ on the $XY$-plane of $\{O\}$ in a quadratic form:
\begin{equation}
\label{energy}
U_\text{XOY}(v) = \frac{1}{2} (v_d - v)^\top \mathbf{K}_d|_{1:3, 1:3} (v_d - v)    
\end{equation}
As illustrated in Fig.~\ref{subspace}-(c12), for an angle formed by the target corner vertex $v_j = T_{\{OH\}}v_h^\text{corner}$ and its nearby vertices, its edges are considered non-penetration walls in $U_\text{XOY}(v)$. Let $v_t$ be the position of $v_\text{lateral}$ at time $t$; we aim to leverage such constraints to shape the motion of $v_t$ to rest at  $v_j$ under the drive of $U_\text{XOY}(v)$.
Specifically, our objective is to formulate a \textit{potential well} to let $v_j$ be the local minimum in a potential energy field so that $v_t$ tends to rest at $v_j$ without escaping. This process involves two fundamental requirements: 1) finding an area $\mathcal{A}_\text{well}$ to place the attraction point $v_d$ (defined by $\Psi(\mathbf{x}_t^\text{d})$) and 2) finding the \textit{basin of attraction} $\mathcal{A}_\text{basin}$ where all initial states $v_t$ inside it will eventually move to $v_j$. Both $\mathcal{A}_\text{well}$ and $\mathcal{A}_\text{basin}$ should only condition on the non-penetration constraints from angle $\angle v_i v_j v_k$.

We start with a straightforward example of aligning a point $v_t$ to a determinate vertex $v_j$ in the small interior area of an angle $\angle v_i v_j v_k$ on the $XY$-plane of $\{O\}$. First, the formulation of $\mathcal{A}_\text{well}$ is defined in \textit{Lemma}~\ref{well1}. Then, we introduce the formulation of $\mathcal{A}_\text{basin}$ in \textit{Lemma}~\ref{well2}.

\begin{lemma}
\label{well1}
Area $\mathcal{A}_{\text{well}} \subset \mathbb{R}^3$ exist on the $XY$-plane of $\{O\}$ and only condition on the angle $\angle v_i v_j v_k$ (as visualized in Fig.~\ref{subspace}-(c1): Region A), for any desired point $v_d\in\mathcal{A}_{\text{well}}$, $v_j$ is always the local minimum state in the created basin of attraction $\mathcal{A}_{\text{basin}}^{v_d}\subset \mathbb{R}^3$ (as visualized in Fig.~\ref{subspace}-(c2): Region B). $\mathcal{A}_{\text{well}}$ is defined as:
\begin{equation} 
\label{wells}
    \mathcal{A}_{\text{well}} = \{v\in \mathbb{R}^3 \mid \frac{\pi}{2}\leq \angle v_i v_j v \leq (\frac{3\pi}{2} - \angle v_i v_j v_k) \}
\end{equation}
while $\mathcal{A}_{\text{basin}}^{v_d}$ is defined as:
\small
\begin{equation}
    \mathcal{A}_{\text{basin}}^{v_d} = \{v\in \mathbb{R}^3 \mid 
    \angle v_i v_d v \leq \angle v_i v_d v_k \cap \angle v_i v_j v \leq \angle v_i v_jv_k\}    
\end{equation}
\normalsize

\end{lemma}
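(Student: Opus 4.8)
The plan is to reduce the claim to a two-dimensional constrained minimization of the quadratic potential $U_\text{XOY}$ of Eq.~\eqref{energy} on the feasible wedge bounded by the two edges $v_j v_i$ and $v_j v_k$, treated as non-penetration walls, and then to analyze the projected-gradient flow that Lemma~\ref{energy_f_l} certifies to be energy-decreasing. I would place $v_j$ at the origin, write $\hat a,\hat b$ for the unit directions toward $v_i$ and $v_k$ with $\angle v_i v_j v_k=\gamma$, and parametrize the attraction point by $\phi=\angle v_i v_j v_d$. Since $\mathbf K_d|_{1:3,1:3}$ is diagonal and isotropic on the plane, $U_\text{XOY}(v)$ is proportional to $\|v_d-v\|^2$, so the driving force $-\nabla U_\text{XOY}\propto(v_d-v)$ always points from $v$ toward $v_d$; this is the single fact I will use throughout.

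For the first part ($\mathcal A_\text{well}$ making $v_j$ a local minimum), I would apply the first-order (KKT) condition at the corner. The cone of feasible directions at $v_j$ is exactly the interior wedge spanned by $\hat a$ and $\hat b$, and $v_j$ is a constrained local minimum iff the directional derivative of $U_\text{XOY}$ is nonnegative along every feasible direction, i.e. $(v_d-v_j)\cdot d\le 0$ for all $d$ in the wedge. Because this inner product is linear in the direction, it suffices to enforce it on the two extreme rays $\hat a$ and $\hat b$, which reduces to $\cos\phi\le 0$ together with $\cos(\angle v_d v_j v_k)\le 0$. Translating these two inequalities back into the oriented-angle convention of Definition~\ref{subspace_proj} recovers precisely $\tfrac{\pi}{2}\le\phi\le\tfrac{3\pi}{2}-\gamma$, i.e. the definition of $\mathcal A_\text{well}$ in Eq.~\eqref{wells}; equivalently, the pull toward $v_d$ presses $v_j$ against both walls simultaneously, so the two wall reactions can balance the force.

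For the second part (the basin $\mathcal A_\text{basin}^{v_d}$), I would first show that $v_j$ is the \emph{unique} equilibrium of the flow inside the feasible wedge. There is no interior stationary point, because $v_d$ lies outside the wedge (it is in $\mathcal A_\text{well}$), so $\nabla U_\text{XOY}$ never vanishes in the interior; on each edge, the foot of the perpendicular from $v_d$ sits at signed distance $(v_d-v_j)\cdot\hat a$ (resp. $\cdot\,\hat b$), which is nonpositive exactly by the $\cos\le 0$ inequalities just established, so the energy restricted to each wall is monotone with minimizer at $v_j$ and the tangential force pushes $v$ inward. I then use $U_\text{XOY}$ as a Lyapunov function: by Lemma~\ref{energy_f_l} the compliant-contact dynamics decreases $U_\text{XOY}$ monotonically and can rest only at an equilibrium, so by a LaSalle-type argument the trajectory converges to $v_j$ provided it remains in the valid region. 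The remaining work is to verify that the two defining inequalities of $\mathcal A_\text{basin}^{v_d}$ are exactly the conditions that keep the trajectory valid: $\angle v_i v_j v\le\gamma$ certifies that the start lies in the feasible wedge, while $\angle v_i v_d v\le\angle v_i v_d v_k$ certifies that the straight pull toward $v_d$ meets a wall within the segment bounded by $v_i/v_k$ rather than sliding the contact past the far vertex or out through the open side of the corner.

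I expect the trajectory-containment step of the basin argument to be the main obstacle. Characterizing the projected-gradient motion requires splitting into free motion (a straight segment toward $v_d$ until it contacts an edge) and sliding motion (monotone descent along an edge toward $v_j$), and then arguing, using the viewing-cone inequality $\angle v_i v_d v\le\angle v_i v_d v_k$, that contact always occurs on the admissible portion of a wall so that no trajectory escapes past $v_i$ or $v_k$. Once containment and the uniqueness of the equilibrium $v_j$ are in hand, convergence follows from the monotone energy decrease of Lemma~\ref{energy_f_l}; the delicate part is the geometric bookkeeping showing these two specific angle inequalities are precisely the right certificate, which I would settle by a case analysis on which edge the free segment first meets.
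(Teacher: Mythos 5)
Your proposal is correct, but it is structured quite differently from the paper's own argument, and it proves strictly more than the paper does. The paper's proof of Lemma~\ref{well1} is purely static and essentially one line: every $v_t$ in $\mathcal{A}_{\text{basin}}^{v_d}$ lies in the wedge cone at $v_j$ (the second defining inequality of the basin), while Eq.~\eqref{wells} places $v_d - v_j$ in that cone's polar cone, so $(v_d-v_j)\cdot(v_t-v_j)\le 0$ and hence $|v_d-v_t|>|v_d-v_j|$ for every $v_t\neq v_j$ in the basin; $v_j$ is therefore the strict minimizer of the quadratic energy over $\mathcal{A}_{\text{basin}}^{v_d}$, and no trajectory or flow is analyzed at this stage. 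Your first part reaches the same set $\mathcal{A}_{\text{well}}$ by the KKT/normal-cone route---enforcing $(v_d-v_j)\cdot d\le 0$ on the two extreme rays $\hat a,\hat b$---which is equivalent by convexity of the potential and of the wedge, and it buys something the paper leaves implicit: it shows Eq.~\eqref{wells} is \emph{exactly} (not merely sufficiently) the set of attraction points for which the corner satisfies the equilibrium condition, i.e., the definition is tight. Your second part (uniqueness of the equilibrium, Lyapunov/LaSalle descent, and the free-flight versus sliding case analysis with the containment question) is not part of this lemma in the paper at all; it is essentially the content of the paper's proofs of Lemma~\ref{well2} and Lemma~\ref{ali}, where contact with a wall cancels the normal component of $-\nabla U_\text{XOY}$ and the point performs ``physical gradient descent'' along the wall to $v_j$. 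Consequently, the step you flag as the main obstacle---showing trajectories remain in the valid region so that the viewing-cone inequality $\angle v_i v_d v \le \angle v_i v_d v_k$ earns its place in the basin definition---is indeed the delicate point, but it is a gap only in your anticipation of Lemma~\ref{well2}, not in Lemma~\ref{well1} as the paper construes and proves it: for the static claim that $v_j$ is the minimum-energy state in $\mathcal{A}_{\text{basin}}^{v_d}$, your KKT argument together with the observation that the basin is contained in the wedge already constitutes a complete (and cleaner) proof.
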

\begin{proof} 
The absolute distance between $v_d\in\mathcal{A}_\text{well}$ and $v_t$ represents the energy level of a quadratic-formed energy field. As $\forall v_d\in \mathcal{A}_\text{well}, \forall v_t \in \mathcal{A}_{\text{basin}}^{v_d}, v_t\neq v_j: |v_d-v_t|>|v_d-v_j|$, thus proven $v_j$ is the local minimum state in $\mathcal{A}_{\text{basin}}^{v_d}$, $\mathcal{A}_\text{well}$ and $\mathcal{A}_{\text{basin}}^{v_d}$ establish.
\end{proof} 

\begin{lemma}
\label{well2}
Area $\mathcal{A}_{\text{basin}}\subset \mathbb{R}^3$ exist on the $XY$-plane of $\{O\}$ and only condition on the angle $\angle v_i v_j v_k$ (as visualized in Fig.~\ref{subspace}-(c3): Region B). For any desired point $v_d\in\mathcal{A}_{\text{well}}$ defined in Eq.~\eqref{wells}, $v_j$ is always the local minimum state in $\mathcal{A}_{\text{basin}}$. 
$\mathcal{A}_{\text{basin}} = \cap \{\mathcal{A}_{\text{basin}}^{v_d}\mid \forall v_d\in\mathcal{A}_{\text{well}}\}$ is defined as 
\begin{equation}
\begin{aligned}
 \mathcal{A}_{\text{basin}} =  & \{v\in \mathbb{R}^3 \mid \angle v v_i v_j \leq \frac{\pi}{2} \cap \angle v v_k v_j \leq \frac{\pi}{2}\}\\ &\text{when } \mid v_d - v_j \mid
  \rightarrow\infty \land v_d\in \mathcal{A}_{\text{well}}
\end{aligned}
\end{equation}
\end{lemma}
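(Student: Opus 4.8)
The plan is to treat the robust basin $\mathcal{A}_{\text{basin}}=\bigcap_{v_d\in\mathcal{A}_{\text{well}}}\mathcal{A}_{\text{basin}}^{v_d}$ as the intersection of the per-$v_d$ basins supplied by \textit{Lemma}~\ref{well1}, and to evaluate this intersection in the stated regime $|v_d-v_j|\to\infty$. Recall that $\mathcal{A}_{\text{basin}}^{v_d}$ is cut out by two conditions: the wedge-membership condition $\angle v_i v_j v\le \angle v_i v_j v_k$, which carries no $v_d$-dependence and merely keeps $v$ inside the corner, and the cone-at-$v_d$ condition $\angle v_i v_d v\le \angle v_i v_d v_k$, which contains all the $v_d$-dependence. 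By \textit{Lemma}~\ref{energy_f_l} the quasi-static motion is gradient descent of the quadratic field $U_{\text{XOY}}$ toward $v_d$, so the statement ``$v_j$ is the minimizer in $\mathcal{A}_{\text{basin}}^{v_d}$'' is exactly that every $v$ in that region descends to $v_j$; the robust basin is then the set of $v$ descending to $v_j$ for \emph{every} admissible placement of $v_d$.

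First I would linearize the $v_d$-dependent condition. As $|v_d-v_j|\to\infty$ along a unit direction $\mathbf{u}=(v_d-v_j)/|v_d-v_j|$, the rays $v_d\!\to\!v_i$, $v_d\!\to\!v$ and $v_d\!\to\!v_k$ all become antiparallel to $\mathbf{u}$, so the subtended-angle inequality $\angle v_i v_d v\le \angle v_i v_d v_k$ degenerates into a comparison of the offsets of $v$, $v_i$, $v_k$ transverse to $\mathbf{u}$. Concretely, $\mathcal{A}_{\text{basin}}^{v_d}$ converges to the slab bounded by the two lines through $v_i$ and through $v_k$ parallel to $\mathbf{u}$, intersected with the corner wedge. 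The key structural fact I would use next is that the angular range of $\mathcal{A}_{\text{well}}$ in Eq.~\eqref{wells}, namely $[\tfrac{\pi}{2},\,\tfrac{3\pi}{2}-\angle v_i v_j v_k]$, is precisely the outward normal cone of the wedge at $v_j$; hence its two endpoints are the directions perpendicular to the walls $v_jv_i$ and $v_jv_k$, respectively.

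With this identification the binding constraints come from the two extreme directions. The line through $v_i$ parallel to the wall-normal $\mathbf{u}_1\perp v_jv_i$ is exactly the locus $\angle v v_i v_j=\tfrac{\pi}{2}$, with half-plane $\{\angle v v_i v_j\le\tfrac{\pi}{2}\}$; symmetrically the extreme direction $\mathbf{u}_2\perp v_jv_k$ yields $\{\angle v v_k v_j\le\tfrac{\pi}{2}\}$. I would then prove the two inclusions. For $\subseteq$: any $v$ violating, say, $\angle v v_i v_j\le\tfrac{\pi}{2}$ falls outside the slab generated by $\mathbf{u}_1$, so $v\notin\mathcal{A}_{\text{basin}}^{v_d}$ for $v_d$ far along $\mathbf{u}_1\in\mathcal{A}_{\text{well}}$, hence $v\notin\mathcal{A}_{\text{basin}}$. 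For $\supseteq$: if $v$ satisfies both perpendicularity conditions, I would show it lies in every slab generated by an intermediate $\mathbf{u}\in[\mathbf{u}_1,\mathbf{u}_2]$, using that the family of slab half-planes is monotone in $\mathbf{u}$ so the intersection over the whole cone is attained at the two endpoints. Combining the inclusions gives the claimed $\mathcal{A}_{\text{basin}}=\{\angle v v_i v_j\le\tfrac{\pi}{2}\}\cap\{\angle v v_k v_j\le\tfrac{\pi}{2}\}$, and since $v_j$ minimizes $U_{\text{XOY}}$ within each $\mathcal{A}_{\text{basin}}^{v_d}\supseteq\mathcal{A}_{\text{basin}}$, it is the local minimum of the field throughout $\mathcal{A}_{\text{basin}}$.

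The main obstacle I anticipate is the limit step itself: making rigorous that the nonlinear subtended-angle inequality at $v_d$ collapses to the parallel-slab condition as $|v_d-v_j|\to\infty$, with enough uniform control that the endpoints $\mathbf{u}_1,\mathbf{u}_2$ genuinely dominate the one-parameter family and no intermediate direction contributes an extra binding edge. Establishing this monotonicity of the slab half-planes in $\mathbf{u}$ over the normal cone, together with the consistent sidedness of each half-plane relative to the wedge, is the technical crux; the remaining distance and angle bookkeeping is routine given \textit{Lemma}~\ref{well1}.
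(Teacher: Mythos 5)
Your proposal is correct in substance but follows a genuinely different route from the paper. The paper's proof is dynamical rather than set-theoretic: it argues that any point in the basin descends along $-\nabla U_\text{XOY}$, contacts a non-penetration wall, and thereafter slides under the projected force $\mathbf{F}_\text{total}=-\nabla U_\text{XOY}(v) + (\nabla U_\text{XOY}(v)\cdot\mathbf{n}_\text{wall})\mathbf{n}_\text{wall}$, stopping only where the wall normal is collinear with the energy gradient, which by \textit{Lemma}~\ref{well1} happens at $v_j$; it never actually computes the intersection $\cap\{\mathcal{A}_{\text{basin}}^{v_d}\mid v_d\in\mathcal{A}_{\text{well}}\}$, takes the limit $|v_d-v_j|\rightarrow\infty$, or explains where the two right-angle conditions come from. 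You instead derive the stated formula: you linearize the cone condition at $v_d$ into a slab, identify $\mathcal{A}_{\text{well}}$ of Eq.~\eqref{wells} as the outward normal cone of the wedge at $v_j$ whose extreme directions are the wall normals, and argue by conic combination that only the extreme slabs bind, importing the convergence-to-$v_j$ property from \textit{Lemma}~\ref{well1} and \textit{Lemma}~\ref{energy_f_l}. Each approach buys something the other lacks: yours is the only one of the two that verifies the closed-form equality asserted in the lemma (and makes the otherwise cryptic qualifier $|v_d-v_j|\rightarrow\infty$ precise), while the paper's wall-sliding argument supplies the physical convergence mechanism that your computation simply inherits from \textit{Lemma}~\ref{well1}, whose own proof only establishes that $v_j$ is the minimizer, not that the constrained motion reaches it. Two cautions for your write-up. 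First, the linearization must interpret $\angle v_i v_d v\leq\angle v_i v_d v_k$ as membership in the cone spanned by the rays $v_d\rightarrow v_i$ and $v_d\rightarrow v_k$; under the literal unsigned-angle reading the admissible region is symmetric about the ray $v_d\rightarrow v_i$, the limiting slab doubles in width (e.g., for a right-angle corner it becomes a $2\times1$ rectangle rather than the unit square), and the claimed equality fails. Second, the endpoint argument produces four extreme half-planes (two anchored at $v_i$, two at $v_k$); the two ``crossed'' ones are redundant only after intersecting with the wedge-membership condition $\angle v_i v_j v\leq\angle v_i v_j v_k$, so that condition cannot be dropped from the final identity --- this is exactly the sidedness issue you correctly flag as the technical crux.
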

\begin{proof}
The original energy gradient of $U_\text{XOY}(v)$ is derived as $\mathbf{F}_\text{energy}= -\nabla U_\text{XOY}(v)$. While $\mid v_d - v_j \mid \neq 0$, any $v_t\in\mathcal{A}_\text{basin}$ moves along with the force gradient and will get in contact with the non-penetration wall first. As long as $v_t$ is in contact with the wall, the component of the energy gradient $\mathbf{F}_\text{energy}$ that is normal to the wall is canceled out by the non-penetration constraint as illustrated in Fig.~\ref{subspace}-(b2). Given the wall's normal vector $\mathbf{n}_\text{wall}$ at the contact point, the modified force gradient is derived as $\mathbf{F}_\text{total}=\mathbf{F}_\text{energy}+\mathbf{F}_\text{wall}=-\nabla  U_\text{XOY}(v) + (\nabla  U_\text{XOY}(v)\cdot\mathbf{n}_\text{wall})\mathbf{n}_\text{wall}$. For any $v_t$ in contact with the wall, it performs physical gradient descent under force $\mathbf{F}_\text{total}$ and reaches the local minimum as the force gradient $\mathbf{F}_\text{total}=0$ when $\mathbf{n}_\text{wall}$ is on the same line of $\mathbf{F}_\text{energy}$. As proven in \textit{Lemma}~\ref{well1}, the local minimum of energy field is at $v_j$, thus proven $\mathcal{A}_{\text{basin}}$ establish.
\end{proof}

Based on $\mathcal{A}_\text{well}$ and $\mathcal{A}_\text{basin}$, the corner alignment process is introduced in \textit{Lemma}~\ref{ali} through potential well construction.  
\begin{lemma}
\label{ali}
Under the condition that energy caused by translation deviation is significantly larger than that caused by rotation deviation, by setting the desired state $\mathbf{x}_t^{\text{d}}$ with its projected $v_d\in\mathcal{A}_{\text{well}}$, any starting state $\mathbf{x}_t$ with $v_t\in \mathcal{A}_\text{basin}$ will automatically align with the target corner (as the lateral edge of $p_v$ intersects with $ v_j$ at state $\mathbf{x}_{t+1}=\Pi(\mathbf{x}_t, \mathbf{x}_{t}^{\text{d}})$).
\end{lemma}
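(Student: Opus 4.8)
The plan is to assemble this statement directly from the three preceding results --- Lemma~\ref{energy_f_l} (quasi-static descent to a local energy minimum), Lemma~\ref{well1} (placement of the attractor $v_d$), and Lemma~\ref{well2} (the basin $\mathcal{A}_\text{basin}$ admits $v_j$ as its only interior minimum). The work is therefore mostly one of composition rather than fresh calculation, with the nontrivial part being the dimensional reduction that licenses the use of the planar results.

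First I would use the diagonal structure of $\mathbf{K}_d$ together with the stated dominance hypothesis (translation energy significantly larger than rotation energy) to reduce the six-dimensional potential $U(\mathbf{x})$ in Eq.~\eqref{energy_peg} to the planar field $U_\text{XOY}(v)$ of Eq.~\eqref{energy}. Because the stiffness decouples every Cartesian degree of freedom, the translational and rotational energies add without cross terms; under the dominance condition the steady configuration is determined, to leading order, by minimizing $U_\text{XOY}$ over the in-plane position $v_t = v_\text{lateral}$, with the rotational mode contributing only a lower-order correction deferred to the subsequent insertion phase. This legitimizes working entirely with the point $v_t$ moving under $U_\text{XOY}$ subject to the non-penetration walls of the angle $\angle v_i v_j v_k$.

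Next I would invoke Lemma~\ref{energy_f_l}: the compliant interaction $\Pi$ is a quasi-static gradient descent whose steady state $\mathbf{x}_{t+1}$ is a local minimum of the constrained field, with energy non-increasing along the motion. It then remains to show that, when $v_d\in\mathcal{A}_\text{well}$ and $v_t\in\mathcal{A}_\text{basin}$, the only reachable such minimum is $v_j$. This is exactly what Lemma~\ref{well1} and Lemma~\ref{well2} supply: for every $v_d\in\mathcal{A}_\text{well}$ we have $|v_d-v|>|v_d-v_j|$ for all $v\in\mathcal{A}_\text{basin}$ with $v\neq v_j$, so $v_j$ is the unique minimizer of $U_\text{XOY}$ inside $\mathcal{A}_\text{basin}$, while the wall-modified gradient $\mathbf{F}_\text{total}$ confines the descent trajectory to $\mathcal{A}_\text{basin}$ without escape. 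Composing these facts, the descent from any $v_t\in\mathcal{A}_\text{basin}$ terminates at $v_j$, i.e. $v_\text{lateral}=v_j$ at $\mathbf{x}_{t+1}$, which is precisely the claimed corner alignment ($\Delta v\to 0$).

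I expect the main obstacle to be the reduction step rather than the funnel composition: rigorously justifying that the translation-dominates-rotation hypothesis permits replacing the full field $U(\mathbf{x})$ by its planar restriction without introducing spurious local minima from the coupled wall contacts. I would treat this as a singular-perturbation argument --- the fast translational subsystem equilibrates to $v_j$ while the slow rotational coordinate is effectively frozen --- and note that because $\mathcal{A}_\text{basin}$ and $\mathcal{A}_\text{well}$ were constructed to depend only on the angle $\angle v_i v_j v_k$ and not on $\alpha$ or $\beta$, the basin guarantee is uniform over the rotational coordinate, so the reduction does not disturb the uniqueness of the minimizer $v_j$.
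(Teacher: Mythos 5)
Your proposal is correct and follows essentially the same route as the paper's proof, which likewise establishes the claim by composing Lemma~\ref{well1} and Lemma~\ref{well2} (with Lemma~\ref{energy_f_l} supplying the ``rests at the local minimum'' step): the impedance command and the corner walls form a potential well with $v_j$ as its local minimum, so $\mathbf{x}_{t+1}$ settles there. The paper states this composition in a single sentence and leaves the planar reduction implicit, whereas you spell out the reduction via the diagonal $\mathbf{K}_d$ and the translation-dominance hypothesis; this is a more careful rendering of the same argument, not a different one.
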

\begin{proof}
As the command from impedance control and the target corner creates a potential well with $v_j$ as its local minimum energy state, as proved in \textit{Lemma}~\ref{well1} and~\ref{well2}, $\mathbf{x}_{t+1}$ automatically rests at the local minimum state as $v_{t+1}\rightarrow v_j$.
\end{proof}

We relax the assumption on a determinate $v_j = T_{\{OH\}}v_h^\text{corner}$ and expand the potential well construction process on the probabilistic distribution of the target hole's state $P(\widetilde{T}_{\{OH\}})$, as outlined in Alg.~\ref{alg:alignment}. $K$ states are sampled as $\{\widetilde{T}_{\{OH\},i}\}^{K}_{i=1} \sim P(\widetilde{T}_{\{OH\}})$ and the ground truth of $T_{\{OH\}}$ is assumed to be covered the samples. By calculating the intersection area of $\widetilde{\mathcal{A}}_\text{basin}=\cap \{\mathcal{A}_{\text{basin,i}}\}^k_{i=1}$ and $\widetilde{\mathcal{A}}_\text{well}=\cap \{\mathcal{A}_{\text{well,i}}\}^K_{i=1}$ over pose samples $\{\widetilde{T}_{\{OH\},i}\}^{K}_{i=1}$, it is guaranteed to find a sub-area as $\widetilde{\mathcal{A}}_\text{well}\subset T_{\{OH\}}\mathcal{A}_\text{well}$ and $\widetilde{\mathcal{A}}_\text{basin}\subset T_{\{OH\}}\mathcal{A}_\text{basin}$ for potential well construction. Theoretically, the robustness of the corner alignment is conditioned on  $\widetilde{\mathcal{A}}_\text{well}$ and $\widetilde{\mathcal{A}}_\text{basin}$ instead of the geometric size of $\mathcal{V}_{h}$. 
\begin{algorithm}
\caption{Nondeterministic Corner Alignment}
\renewcommand{\algorithmicrequire}{\textbf{Input:}}
\renewcommand{\algorithmicensure}{\textbf{Output:}}
\begin{algorithmic}[1]
\label{alg:alignment}
\REQUIRE bounded region $\mathcal{X}_{T1}$, $P(\widetilde{T}_{\{OH\}})\gets \mathcal{X}_{T1}$
\STATE Initialize samples $\{\widetilde{T}_{\{OH\},i}\}^{K}_{i=1} \sim P(\widetilde{T}_{\{OH\}})$, $T_{\{OH\}} \in \widetilde{T}_{\{OH\},i}\}^{K}_{i=1}$, $\widetilde{\mathcal{A}}_\text{well} = \widetilde{\mathcal{A}}_\text{basin} = {XY}\text{-plane of }  \{O\}$
\FOR{$i = 0,1,..., K-1$}
\STATE $\widetilde{\mathcal{A}}_\text{well}$ = $\widetilde{\mathcal{A}}_\text{well}\cap \widetilde{T}_{\{OH\},i}\mathcal{A}_\text{well}$
\STATE $\widetilde{\mathcal{A}}_\text{basin}$ = $\widetilde{\mathcal{A}}_\text{basin}\cap \widetilde{T}_{\{OH\},i}\mathcal{A}_\text{basin}$
\ENDFOR
 \STATE $v_\text{lateral}\gets \Psi(\mathbf{x})$, $v_\text{lateral}\in \widetilde{\mathcal{A}}_\text{basin}$
\STATE $v^{\text{d}}_\text{lateral}\gets \Psi(\mathbf{x}^{\text{d}})$, $v^{\text{d}}_\text{lateral}\in \widetilde{\mathcal{A}}_\text{well}$
\STATE Execute interaction $c^v=(\mathbf{x}, \mathbf{x}^{\text{d}})$
\end{algorithmic}
\end{algorithm}

\begin{figure*}
    \centering
    \includegraphics[width=\textwidth]{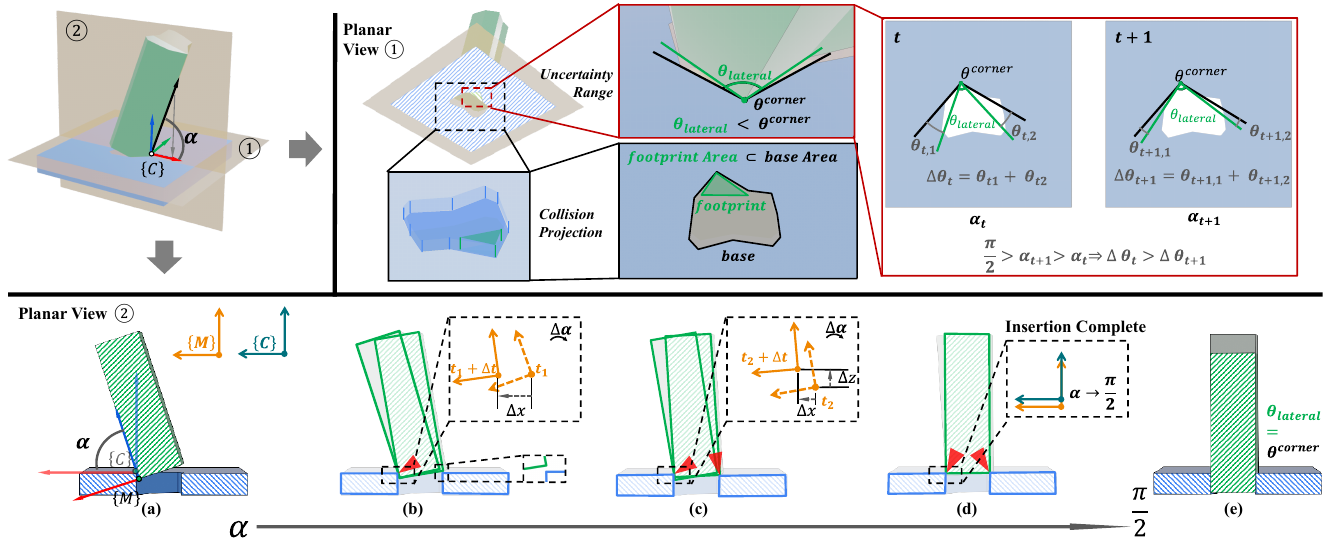}
    \caption{An overview of the motion constraints based on the $XY$-plane (planar view 1) and $XZ$-plane (planar view 2) of $\{C\}$. \textbf{Planar View 1:} An uncertainty w.r.t to rotation is constrained by the range of $\Delta\theta$ through interactions. Meanwhile, $\Delta\theta$ is also decreasing significantly over insertion motions. \textbf{Planar View 2:} The translation of $\{M\}$ w.r.t $\{C\}$ induced by the potential energy field caused by rotation deviation and environmental constraints (red friction cones) is a passive result to balance $\mathbf{F}_{\text{ext}}$, thus the position of the origin of $\{M\}$ will not reach the $XY$-plane of $\{C\}$ as the process finishes at $\alpha\rightarrow\frac{\pi}{2}$.}
    \label{fig:physical}
\vspace{-0.4cm}
\end{figure*}
 \textbf{Insertion under Motion Constraints:} 
As constant contact between the peg and its matching hole is established, we aim to leverage the motion constraints from the environment to synthesize insertion motions. Successful insertion motions are formulated as a sequence of interactions $S = [\mathbf{c}_0^v, \mathbf{c}_1^v, ..., \mathbf{c}_t^v]$ that connect the initial inclined state to the target peg-in-hole configuration. We keep the alignment tight by keeping the projection $v_d$ of the desired state inside $v_d\in \widetilde{\mathcal{A}}_{well}$. The alignment is formed by the energy field caused by the translation deviation; additionally, we aim to leverage the energy field caused by the rotation deviation to drive the insertion process.

\begin{definition}
\label{insertion_p}
Let $\alpha^*=\frac{\pi}{2}$, the insertion process is formulated as follows: under the condition that the corner contact is formed as a pivot point, a complete insertion is regarded as progressively turning the inclined angle of the peg approaching vertical as $\alpha \rightarrow \alpha^*$. 
\end{definition}

We describe the relative motion between the peg and its matching hole using a \textit{contact reference frame} and a \textit{manipulation frame}. As illustrated in Fig.~\ref{fig:physical}, a \textit{contact reference frame} $\{C\}$ is attached to the on the corner $T_{\{OH}\}v_{h}^{\text{corner}}$, where its origin is at $T_{\{OH\}}v_{h}^{\text{corner}}$ and its $Z$-axis shares the same direction with $\{O\}$. The positive direction of its $X$-axis is aligned with the rotation angle $\beta$ from the inclined state of the peg on the $XY$-plane of $\{O\}$. The $Y$-axis is placed orthogonal with reference to the $XZ$-plane of $\{C\}$. By definition, $\{C\}$ is not fixed with reference to ${\{O\}}$ as its $X$-axis is defined by the rotation angle $\beta$ of the peg's inclined state. 

\begin{lemma}
During the insertion process (\textit{Definition}~\ref{insertion_p}), $\{C\}$ is a constrained frame with a fixed position and a limited orientation range with reference to $\{O\}$.
\end{lemma}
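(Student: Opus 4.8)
The plan is to decompose the claim into two independent assertions about the rigid-body pose of $\{C\}$ relative to $\{O\}$: (i) its origin is a fixed point, and (ii) its orientation varies only within a bounded set. For the position, I would argue directly from the definition of $\{C\}$: its origin coincides with $T_{\{OH\}}v_h^{\text{corner}}$, and since the ground-truth hole pose $T_{\{OH\}}\in\text{SE(3)}$ is a constant of the task and $v_h^{\text{corner}}\in S(\mathcal{V}_h)$ is a fixed vertex of the peg--hole geometry, their product is a single constant point in $\{O\}$. By \textit{Definition}~\ref{insertion_p} the corner contact persists as a pivot throughout the insertion, so this point is also the physical contact location at every step; hence the origin of $\{C\}$ neither moves by definition nor by the mechanics of the interaction.

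For the orientation, I would first reduce the three rotational degrees of freedom to a single free parameter. By construction the $Z$-axis of $\{C\}$ is parallel to that of $\{O\}$, which pins the two tilt degrees of freedom; the $X$-axis is fixed by the rotation angle $\beta$ of the current inclined state, and the $Y$-axis follows by orthogonality. Thus the orientation of $\{C\}$ is a function of $\beta$ alone, and bounding the admissible range of $\beta$ bounds the entire orientation. Note that $\beta$ is precisely the azimuthal lean direction of the peg in the $XY$-plane of $\{O\}$ and is decoupled from the inclination angle $\alpha$ that is driven from the tilted state toward $\alpha^*$ during insertion (\textit{Definition}~\ref{insertion_p}).

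The core of the argument is then to bound $\beta$ from the corner geometry. The two edges meeting at $v_h^{\text{corner}}$ map under the fixed $T_{\{OH\}}$ to two fixed non-penetration walls, exactly the walls used to build the potential well in \textit{Lemma}~\ref{well1} and~\ref{well2}. During insertion the desired point is kept inside $\widetilde{\mathcal{A}}_{\text{well}}$, so the translation-induced potential of \textit{Lemma}~\ref{energy_f_l} continues to press the supporting vertex against the corner; for the peg to remain wedged without penetrating either wall, its lean direction $\beta$ must point into the interior angular sector spanned by the two edge directions. Projecting the non-penetration constraint of Eq.~\eqref{non-penetration} onto the $XY$-plane (as is already done for $\mathcal{A}_{o,\text{footprint},t}$) shows that the admissible $\beta$ is confined to the angular interval determined by $\angle v_h^{\text{corner}}$, which is independent of $t$; together with the bounded behaviour of $\Delta\theta$ this prevents $\beta$ from drifting outside that interval. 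Combining the fixed origin with the bounded $\beta$ yields that $\{C\}$ has a fixed position and a limited orientation range with respect to $\{O\}$.

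The step I expect to be the main obstacle is the rigorous bound on $\beta$: turning the intuitive ``the peg is wedged into a fixed corner'' picture into a clean inequality requires relating the three-dimensional non-penetration constraints to the two fixed wall directions and invoking the quasi-static equilibrium of \textit{Lemma}~\ref{energy_f_l} to guarantee the peg stays pressed into, rather than sliding out of, the corner throughout the tilt from $\alpha$ to $\alpha^*$. I would isolate this as the only nontrivial estimate and treat the fixed-position and fixed-$Z$-axis claims as immediate consequences of the definition of $\{C\}$.
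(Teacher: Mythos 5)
Your proposal is correct and takes essentially the same route as the paper: the position of $\{C\}$ is fixed because its origin is defined at the constant corner point $T_{\{OH\}}v_h^{\text{corner}}$, and the orientation (a function of the single parameter $\beta$, since the $Z$-axis is pinned by definition) is limited because the peg's projected footprint must stay wedged inside the fixed corner --- which the paper states compactly as $\theta_{\text{lateral}}<\theta^{\text{corner}}$ with uncertainty range $\Delta\theta$. The only difference is scaffolding: your energy-based argument that the peg stays pressed into the corner is something the paper does not prove here, since the persistent pivot contact is taken as a standing condition of the insertion process in Definition~\ref{insertion_p}.
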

\begin{proof} 
As the lateral edge of $p_v$ intersects with the corner vertex as illustrated in the \textit{Planar View 1} from Fig.~\ref{fig:physical}, the possible pose of the peg is restricted by its projection $\theta_{\text{lateral}}<\theta^{\text{corner}}$ and the uncertainty range is subject to $\Delta\theta$. Since the orientation of the inclined peg is subject to $\Delta\theta$, proving that $\{C\}$ is a constrained frame with reference to $\{O\}$.
\end{proof}

Selecting a \textit{manipulation frame} $\{M\}$ attached to the peg is critical to describe its relative motion in $\{C\}$. Since the interactions $\mathbf{c}_t^v$ are constrained as \textit{Definition}~\ref{inclined_state} by a supporting vertex $p_v$, we set the origin of $\{M\}$ to $p_v$ with its $Z$-axis's positive direction towards $\mathbf{n}$. The $X$-axis of $\{M\}$ is placed orthogonally in the plane formed by the lateral edge of $p_v$ and the $X$-axis of $\{C\}$ as illustrated in the \textit{Planar View 2} of Fig.~\ref{fig:physical}. We denote the instantaneous relative motion of $\{M\}$ with reference to $\{C\}$ at time $t$ as a twist vector $\xi_t= [
\bm{\omega}\quad \mathbf{d}]^\text{T}$, in which $\bm{\omega}=[\omega_x, \omega_y,\omega_z]\in\mathbb{R}^3$ is the rotation rate along each axis of $\{C\}$ and $\mathbf{d}=[d_x,d_y,d_z]\in\mathbb{R}^3$ is the rate of change of the position.  

\begin{lemma}
During the insertion process (\textit{Definition}~\ref{insertion_p}), $\xi_t$ is restricted to translation in the $XZ$-plane of $\{C\}$ and rotation about the $Y$-axis of $\{C\}$, forming a subset of SE(3).
\end{lemma}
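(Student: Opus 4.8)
The plan is to show that the only nonvanishing components of $\xi_t$ are $\omega_y$, $d_x$, and $d_z$, so that $\xi_t$ lies in the three-dimensional planar-motion subspace of $\mathrm{SE}(3)$ associated with the $XZ$-plane of $\{C\}$. I would split the argument into an orientation part (forcing $\omega_x=\omega_z=0$) and a translation part (forcing $d_y=0$), each driven by the geometric relationship between $\{M\}$ and $\{C\}$ together with the insertion constraint of \textit{Definition}~\ref{insertion_p}.

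For the orientation part, I would first note that the $Z$-axis of $\{M\}$ is $\mathbf{n}$, which by Eq.~\eqref{n_vertex} can be written as $\mathbf{n}=\cos\alpha\,\hat{\mathbf{x}}_C+\sin\alpha\,\hat{\mathbf{z}}_C$, since the $X$-axis of $\{C\}$ is aligned with the rotation angle $\beta$ and its $Z$-axis coincides with that of $\{O\}$. Hence $\mathbf{n}$ lies in the $XZ$-plane of $\{C\}$. By construction the $X$-axis of $\{M\}$ also lies in the plane spanned by the lateral edge (direction $\mathbf{n}$) and the $X$-axis of $\{C\}$, i.e.\ in that same $XZ$-plane, so the $Y$-axis of $\{M\}$ is parallel to the $Y$-axis of $\{C\}$. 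The relative orientation of $\{M\}$ with respect to $\{C\}$ is therefore a pure rotation about this common $Y$-axis, parameterized solely by $\alpha$. Because \textit{Definition}~\ref{insertion_p} restricts insertion to turning $\alpha\rightarrow\alpha^*$ while the dependence on $\beta$ is absorbed into the (tracking) definition of $\{C\}$, the angular rate reduces to $\omega_y=\dot{\alpha}$ with $\omega_x=\omega_z=0$.

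For the translation part, I would invoke the corner-alignment guarantee (\textit{Lemma}~\ref{ali}) together with the fact that contact is kept tight throughout insertion: the lateral edge through $p_v$ keeps passing through the pivot $v_j=T_{\{OH\}}v_h^\text{corner}$, which is exactly the origin of $\{C\}$. The origin of $\{M\}$ is $p_v$ itself, lying on that lateral edge at a displacement along the fixed direction $\mathbf{n}$ from $v_j$; since $\mathbf{n}$ lies in the $XZ$-plane of $\{C\}$ and $v_j$ is its origin, $p_v$ remains in the $XZ$-plane at all times. Any admissible sliding of $p_v$ along the lateral edge, or of the contact along the corner, then changes only $d_x$ and $d_z$, giving $d_y=0$. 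Collecting both parts, $\xi_t=[\,0,\ \omega_y,\ 0,\ d_x,\ 0,\ d_z\,]^\top$, a three-dimensional subset of $\mathrm{SE}(3)$.

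The step I expect to be the main obstacle is rigorously ruling out out-of-plane drift, given that $\{C\}$ is itself \emph{not} fixed in $\{O\}$ because its $X$-axis tracks $\beta$. The clean route is to lean on the preceding lemma, which establishes that $\{C\}$ has a fixed origin and only a $\Delta\theta$-bounded orientation range, so that any genuine variation in $\beta$ is re-absorbed by the definition of $\{C\}$ rather than surfacing as an $\omega_z$ or $d_y$ term in the \emph{relative} twist $\xi_t$. The more delicate point is showing that the maintained corner contact --- the potential well kept via $v_d\in\widetilde{\mathcal{A}}_\text{well}$ --- actually forbids lateral separation and therefore a nonzero $d_y$; this is the one place where the quasi-static balance of $\mathbf{F}_\text{ext}$ against the non-penetration constraint of Eq.~\eqref{ext_force} would have to be argued explicitly rather than merely assumed.
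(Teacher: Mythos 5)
Your proof is correct and takes essentially the same approach as the paper: the paper's one-sentence proof simply asserts that the $XZ$-plane of $\{M\}$ and the $XZ$-plane of $\{C\}$ lie in the same geometric plane in $\{O\}$ and reads off $\xi_t = [0, \omega_y, 0, d_x, 0, d_z]$, which is precisely the coplanarity fact your orientation/translation decomposition establishes. Your expanded argument (writing $\mathbf{n}=\cos\alpha\,\hat{\mathbf{x}}_C+\sin\alpha\,\hat{\mathbf{z}}_C$, and using the maintained corner alignment to place $p_v$ on the lateral edge through the origin of $\{C\}$) merely supplies the justification that the paper leaves implicit.
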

\begin{proof} 
Since the $XZ$-plane of $\{C\}$ and the $XZ$-plane of $\{M\}$ lies in the same geometric plane in $\{O\}$, thus proving that the relative motion is restricted as $\xi = [0, \omega_y, 0, d_x, 0, d_z]$
\end{proof} 
Since frame $\{C\}$ is within a limited range, it can be regarded as a static frame at any instant movement $\xi_t$ described in \textit{Lemma}~\ref{rot}.
\begin{lemma}
\label{rot}
Under the condition that the energy caused by rotation deviation is significantly larger than that caused by translation deviation, by setting the desired state's orientation as a small rotation $\omega_y\Delta t$ on the current state, the deviation $\Delta\alpha = \alpha^*-\alpha $ is decreasing, and the uncertainty range $\Delta\theta$ of $\{M\}$ is shrinking.
\end{lemma}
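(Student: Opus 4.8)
The plan is to derive both conclusions from the dominance of the rotational potential over the translational potential, combined with the pivot structure guaranteed by the two preceding lemmas. Since frame $\{C\}$ has a fixed position and the relative twist is restricted to $\xi = [0, \omega_y, 0, d_x, 0, d_z]$, the corner vertex $T_{\{OH\}}v_h^{\text{corner}}$ acts as a pivot about which the only actively commanded degree of freedom is the rotation $\omega_y$ about the $Y$-axis of $\{C\}$. I would first make precise that rotating the peg by $\omega_y$ is exactly the operation that changes the inclined angle $\alpha$, so that a commanded increment $\omega_y\Delta t > 0$ toward vertical corresponds to a desired state $\mathbf{x}_t^{\text{d}}$ whose inclination satisfies $\alpha^{\text{d}} = \alpha_t + \omega_y\Delta t \le \alpha^*$.

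For the first claim ($\Delta\alpha$ decreasing), I would invoke \textit{Lemma}~\ref{energy_f_l}. The impedance command defines the quadratic potential $U(\mathbf{x})$ with minimum at $\mathbf{x}_t^{\text{d}}$, and the steady state satisfies $U(\mathbf{x}_t) \ge U(\mathbf{x}_{t+1}) \ge U(\mathbf{x}_t^{\text{d}})$ under the non-penetration constraints. Because the rotational stiffness energy dominates, the minimizer is attained essentially along the rotational coordinate, while the translational components $d_x, d_z$ relax passively to balance $\mathbf{F}_{\text{ext}}$ at the pivot, as depicted in \textit{Planar View 2} of Fig.~\ref{fig:physical}. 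The key observation is that increasing $\alpha$ toward vertical is not obstructed by the environment: rotating the inclined peg up about the corner withdraws it from the occupied region rather than creating penetration, so the constraint in Eq.~\eqref{non-penetration} stays inactive in the direction of increasing $\alpha$. Hence the energy-minimizing steady state advances toward the commanded orientation, giving $\alpha_{t+1} > \alpha_t$ and therefore $\Delta\alpha_{t+1} = \alpha^* - \alpha_{t+1} < \Delta\alpha_t$.

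For the second claim ($\Delta\theta$ shrinking), I would argue geometrically about the footprint projection $\Psi$. The uncertainty range $\Delta\theta = |\theta^{\text{corner}} - \theta_{\text{lateral}}|$ measures the gap between the footprint angle $\angle v_{\text{lateral}}$ and the true corner angle $\angle v_h^{\text{corner}}$, subject to the contact constraint $\theta_{\text{lateral}} < \theta^{\text{corner}}$ and the containment $\mathcal{A}_{o,\text{footprint},t} \subset T_{\{OH\}}\mathcal{A}_h$. I would express $\theta_{\text{lateral}}$ as a function of $\alpha$ by tracking how the intersection of the inclined peg's base edges with the $XY$-plane of $\{O\}$ deforms: at small $\alpha$ the oblique section spreads the footprint and admits a wide band of feasible orientations, whereas as $\alpha \to \alpha^* = \frac{\pi}{2}$ the section approaches the true base and the constraint that the lateral edge of $p_v$ passes through the corner forces $\theta_{\text{lateral}} \to \theta^{\text{corner}}$. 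Establishing that this map is monotone, so that the admissible band of $\theta_{\text{lateral}}$ contracts as $\alpha$ grows, yields $\Delta\theta_{t+1} < \Delta\theta_t$ whenever $\alpha_{t+1} > \alpha_t$, coupling the two conclusions.

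The hard part will be the second claim: making the projective relationship between the inclination $\alpha$ and the footprint angle $\theta_{\text{lateral}}$ rigorous, and proving that the admissible orientation band is a \emph{monotonically} decreasing function of $\alpha$ rather than merely vanishing in the limit $\alpha \to \frac{\pi}{2}$. A secondary subtlety is verifying that the corner contact, i.e. the pivot, is maintained throughout the commanded rotation, so that the passive translation $d_x, d_z$ together with the friction cone at the contact keeps the lateral edge intersecting the corner rather than sliding off. This is what licenses treating $\{C\}$ as static at each instant and what ultimately guarantees that the commanded rotation increment is spent on reducing $\Delta\alpha$ instead of drifting the contact point.
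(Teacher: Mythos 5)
Your proposal takes essentially the same approach as the paper: the $\Delta\alpha$ claim rests on the energy-descent argument of Lemma~\ref{energy_f_l} (the commanded small rotation makes a larger inclined angle the lower-energy state, so the constrained steady state advances toward vertical while translation relaxes passively), and the $\Delta\theta$ claim rests on the corner-contact geometry tying the admissible orientation band to $\alpha$. The only divergence is pivot maintenance, which the paper closes with that same energy lemma (the origin of $\{M\}$ always stays below $\{C\}$, so the alignment cannot break) rather than your deferred friction-cone argument, and the $\Delta\theta$ monotonicity you rightly flag as the hard part is something the paper itself only asserts by reference to Fig.~\ref{fig:physical}, so your plan is, if anything, the more explicit of the two.
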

\begin{proof} As illustrated in the \textit{Planar View 2} of Fig.~\ref{fig:physical}, the translation motion $\mathbf{v}$ is a result refined by the passive force from the environmental contacts. The peg cannot break the alignment according to \textit{Lemma}~\ref{energy_f_l}, as the result $\{M\}$ is always lower than $\{C\}$ in the world frame.  Such formulation also guarantees that an enlarged inclined angle represents a smaller potential energy state in the caused energy field; the transition between the peg's state automatically leads to a higher inclined angle under its environmental motion constraints according to \textit{Lemma}~\ref{energy_f_l}.  As illustrated in the \textit{Planar View 1} of Fig.~\ref{fig:physical}, the rotation motion $\omega_x$ and $\omega_z$ is constrained by the limited range of $\Delta\theta$, as the $\alpha$ increases, $\Delta\theta$ decreases. 
\end{proof} 

By incrementally increasing the inclined angle $\alpha$ and updating the measurement of the inclined state's rotation angle $\beta$, the peg's state will progressively move towards the target state while its uncertainty range $\Delta\theta$ is shrinking towards zero. Theoretically, the robustness of the insertion process is conditioned on the peg's state $\textbf{x}_t$ instead of its geometric size.
\begin{theorem}
\label{pf_insert}
The insertion process introduced by inclined angle adjustment after corner alignment is a manipulation funnel in the execution task space based on the general \textit{Definition}~\ref{funnel_form}.
\end{theorem}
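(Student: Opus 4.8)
The plan is to instantiate the general funnel Definition~\ref{funnel_form} in the execution task space, mirroring the two-part structure used for the perception funnel in Theorem~\ref{pf_conv}. First I would fix the task-relevant space $S$ to be the projected inclined-state space of Definition~\ref{subspace_proj}, coordinatized by the pair $(\Delta\alpha, \Delta\theta)$, where $\Delta\alpha = \alpha^* - \alpha$ is the angular gap to the vertical insertion target and $\Delta\theta$ is the residual rotational uncertainty range of the manipulation frame $\{M\}$. The target state $\mathbf{s}^* = \mathbf{x}^*$ is the vertical peg-in-hole configuration, characterized by $\alpha = \alpha^* = \frac{\pi}{2}$ (so $\Delta\alpha = 0$) with the translation alignment $\Delta v = 0$ held throughout by keeping the desired projection inside $\widetilde{\mathcal{A}}_{\text{well}}$.

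Next I would identify a single compliant interaction $\mathbf{c}_t^v$ as the state-transition function $\Pi$ demanded by the definition, and take $S_{\text{in}}$ to be the reachable region bounded by $(\Delta\alpha_t, \Delta\theta_t)$ before the interaction and $S_{\text{out}}$ the region bounded by $(\Delta\alpha_{t+1}, \Delta\theta_{t+1})$ after it. The strict shrinkage $|S_{\text{in}}| > |S_{\text{out}}|$ follows from Lemma~\ref{rot}: setting the desired orientation as a small rotation $\omega_y \Delta t$ on the current state drives $\Delta\alpha$ to decrease while simultaneously narrowing $\Delta\theta$, so both coordinates bounding the region contract and $S_{\text{out}} \subset S_{\text{in}}$ with strictly smaller measure. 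This is the execution analogue of the monotone-shrinkage Lemma~\ref{fs_region}.

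For target containment I would invoke the energy argument of Lemma~\ref{energy_f_l} together with the alignment guarantee of Lemma~\ref{ali}. Because the corner contact persists as a pivot and the passive refinement always leaves $\{M\}$ lower than $\{C\}$ in the world frame, increasing $\alpha$ corresponds to descending the rotational potential, and the peg cannot escape to a higher-energy misaligned configuration without external force. Hence the vertical target $\mathbf{x}^*$, being the limit of the monotone sequence $\Delta\alpha_t \to 0$, lies in every reachable region and in particular $\mathbf{x}^* \in S_{\text{in}}$ and $\mathbf{x}^* \in S_{\text{out}}$ — the execution analogue of Lemma~\ref{vf_approach}. Combining strict shrinkage with target containment satisfies every clause of Definition~\ref{funnel_form}, so $(S_{\text{in}}, \Pi, S_{\text{out}})$ is a manipulation funnel.

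The step I expect to be the main obstacle is making the volume comparison $|S_{\text{in}}| > |S_{\text{out}}|$ genuinely rigorous rather than merely coordinate-wise heuristic. Lemma~\ref{rot} asserts that $\Delta\alpha$ decreases and $\Delta\theta$ shrinks, but I would need to argue that the reachable region is monotone in these two coordinates so that their simultaneous contraction forces a strict decrease of its measure, and that the potential-well geometry does not transiently re-inflate the $\theta$-spread within a single step. Pinning down this joint monotonicity — ideally by exhibiting the region as a nested family indexed by $(\Delta\alpha, \Delta\theta)$ — is where the real work lies; the containment and transition-function clauses then follow almost immediately from the already-established lemmas.
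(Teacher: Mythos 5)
Your proposal is correct and follows essentially the same route as the paper's proof: the paper likewise decomposes the state deviation into decoupled translational and rotational components ($\Delta\mathbf{s} = \Delta v + \Delta\theta$) and cites Lemma~\ref{ali} for the translational part and Lemma~\ref{rot} for the angular part to conclude that the funnel of Definition~\ref{funnel_form} is established over the interactions $\mathbf{c}_t^v$. Your explicit instantiation of the definition's clauses — identifying $S$, $\Pi$, the nested regions $S_{\text{in}} \supset S_{\text{out}}$, and target containment via Lemmas~\ref{energy_f_l} and~\ref{ali} — is in fact more detailed than the paper's own two-sentence argument, which leaves the containment and volume-shrinkage clauses implicit.
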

\begin{proof}
The deviation in the peg's state $\Delta\mathbf{s}=\Delta v + \Delta\theta$ is decreasing decoupled. As $\Delta v$ decreases to zero as proven in \textit{Lemma}~\ref{ali} and $\Delta\theta$ decreases to zero as proven in \textit{Lemma}~\ref{rot} over environmentally constrained interactions $\mathbf{c}_t^v$, thus proving the physical manipulation funnel established.
\end{proof}

\begin{algorithm}
\caption{MPC-based Insertion Planning}
\renewcommand{\algorithmicrequire}{\textbf{Input:}}
\renewcommand{\algorithmicensure}{\textbf{Output:}}
\begin{algorithmic}[1]
\label{alg:insertion}
\REQUIRE system dynamics model $\widetilde{\Pi}$, initial state $\mathbf{x}_t$, target inclined angle $\alpha^*$
\WHILE{not $|\alpha_t - \alpha^*|\rightarrow 0$}
\STATE $(\mathbf{x}_t^{\text{d}}, ..., \mathbf{x}_{t+T-1}^{\text{d}})\gets$ ConstrainedOptimization \COMMENT{Eq.\eqref{mpc_min}}
\STATE Execute the interaction $\mathbf{c}_t^v=(\mathbf{x}_t, \mathbf{x}_t^{\text{d}})$
\STATE Re-measure $\mathbf{x}_{t+1}$, $\alpha_{t+1}\gets \mathbf{x}_{t+1}$
\STATE Update system dynamics model $\widetilde{\Pi}$  \COMMENT{\hyperref[app:trans]{Appendix}}
\STATE $t\gets t+1$ 
\ENDWHILE
\end{algorithmic}
\end{algorithm}

Additionally, we introduce a Model Predictive Control (MPC) framework to find the formulated insertions process $S$. We initialize the system transition function $\widetilde{\Pi}$ with a linear prior and update it online with a Recursive Least Squares (RLS) adaptive filter (see \hyperref[app:trans]{Appendix}). At time step $t$, the system sets a desired state $\mathbf{x}_t^{\text{d}}$, and the state of the peg transitions from $\mathbf{x}_t$ to $\mathbf{x}_{t+1}$ under environmental constraints. Interactions during insertion are successive, which means the resulting steady state from the previous interaction serves as the starting state for the next interaction. The cost function is defined by the inclined angle $\alpha_t$ at state $\mathbf{x}_t$ and the target inclined angle $\alpha^*$ at $\mathbf{x}^*$ as $\mathcal{J}(\mathbf{x}_t, \mathbf{x}^*)= |\alpha_t - \alpha^*|$. A sequence of actions $(\mathbf{x}_t^{\text{d}}, ..., \mathbf{x}_{t+T-1}^{\text{d}})$ is selected to minimize the defined loss over a finite horizon $T$ as:
\small
\begin{equation}
\label{mpc_min}
\begin{aligned}
(\mathbf{x}_t^{\text{d}}, ...,\mathbf{x}_{t+T-1}^{\text{d}}) & = \arg\min \sum_{i=0}^{T-1} (\mathcal{J}(\Pi(\mathbf{x}_{t+i}, \mathbf{x}_{t+i}^{\text{d}}), \mathbf{x}^*) + \mathbf{u}_{t+i})\\
\text{subject to: }  &\Delta\mathbf{x}_\text{min} \leq  |\mathbf{x}_{t+i} - \mathbf{x}_{t+i}^{\text{d}}| \leq \Delta\mathbf{x}_\text{max}  \\
& \Psi(\mathbf{x}_{t+i}^{\text{d}})=[v_\text{lateral}, \theta_\text{lateral}], v_\text{lateral}\in \mathcal{A}_\text{well}\\ 
& i = 0, \dots, N-1
\end{aligned}
\end{equation}
\normalsize
where $\mathbf{u}_{t+i} = ||\mathbf{x}_{t+i} - \mathbf{x}_{t+i}^{\text{d}}||$ is added to ensure smooth control inputs by discouraging unnecessarily large or aggressive control actions. At each step $t$, the system applies the first selected desired state to execute and updates the transition model $\widetilde{\Pi}$. An overview of the MPC framework is outlined in Alg.~\ref{alg:insertion}, and its convergence is proved in \textit{Lemma}~\ref{rot}.

\begin{figure*}[h]
    \centering
    \includegraphics[width=\textwidth]{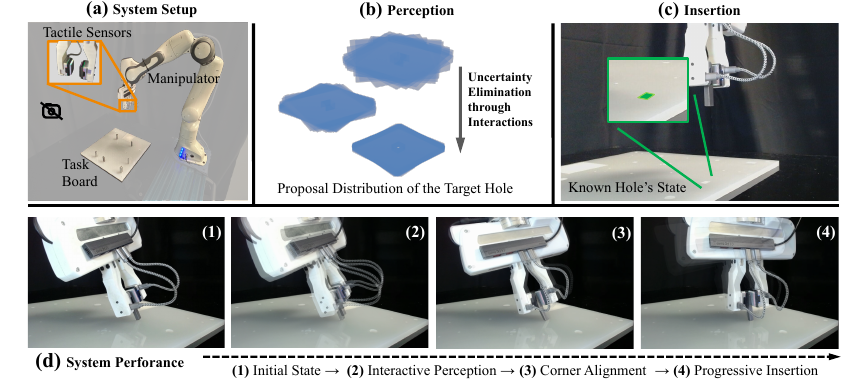}
    \caption{(a) Overview of the System Setup; (b) Ablation study on the perception manipulation funnel; (c) Ablation study on the physical manipulation funnel;  (d) Performance evaluation on the overall funnel-based manipulation system.}
    \label{fig:system}
\end{figure*}

\section{Experiments} 
\label{sec:exp}
In this section, we evaluate the proposed framework across a diverse range of peg-in-hole tasks to demonstrate its robustness and generalizability. Ablation studies are conducted to gain deeper insights into how the funnel-based manipulation approach mitigates uncertainties arising from perception and execution. Specifically, these experiments aim to validate: (1) the perception manipulation funnel eliminates uncertainties in the proposal distribution of the target hole's state while tolerating uncertainties from the imprecise interactions; (2) the physical manipulation funnel absorbs perception and execution uncertainties to achieve precise insertion; (3) by leveraging funnel-based manipulation planning, the system achieves robust and precise insertion tasks with a tolerance level surpassing the robot’s inherent execution precision.

\subsection{Experiment Setup}

Our experimental setup is shown in Fig.~\ref{fig:system}-(a). We test the proposed funnel-based manipulation framework on a robotic system consisting of a Franka Emika Panda manipulator with GelSight Mini attached to the fingertips of its parallel jaw gripper. No camera is required as our system eliminates uncertainties purely through physical interactions. This design removes any system dependence on variable lighting conditions and textures, making it easily generalizable to different scenarios. It is worth noting that, though visual modality is not mandatory, it can be easily integrated as providing the initial estimation of the target hole, which can be further refined by interactive perception or directly bridged with insertion. The in-hand pose of the grasped peg is observed from its corresponding tactile imprints, as discussed in the \hyperref[app:inhand]{Appendix}. Our system design does not require slipping-free conditions since the proposed funnel theory is object-centric. Two types of controllers are adopted in our real-world experiments: 1) a Cartesian impedance controller with a translation error up to $2mm$ and 2) a Cartesian position controller with a translation error up to $1mm$ due to imprecise internal models. A detailed formulation of Cartesian impedance control for a manipulator is discussed in the \hyperref[app:imp]{Appendix}. The Cartesian position controller is applied in position-based commands, and the Cartesian impedance controller is applied in commands requiring physical interactions. Under external contacts, the object undergoes only minor elastic movements on the tactile sensor’s planar surface, while major pose deviations are regulated by the robot's Cartesian impedance controller through the end-effector.

The system is tested on a NIST ATB benchmark and additional tight-clearance tasks, with detailed parameters of the peg-in-hole tasks shown in Fig.~\ref{fig:nist_atb}. The experimental pegs are selected with different base symmetries (ranging from central symmetry and axial symmetry to asymmetry), materials, scales, and clearance levels for a comprehensive evaluation. It is also worth noting that, although \textit{chamfer}\footnote{In the context of peg-in-hole tasks, the chamfer refers to the beveled or tapered edge at the entrance of the hole (and sometimes at the tip of the peg). A chamfer added to the edges can help reduce contact forces, reduce jamming and tolerate misalignment.} on the peg and the hole would make the insertion process easier, as it relatively enlarges the entry space, it is excluded in our experiments for a more challenging task formulation.

\begin{figure}[htbp]
  \centering
  \includegraphics[width=0.9\linewidth]{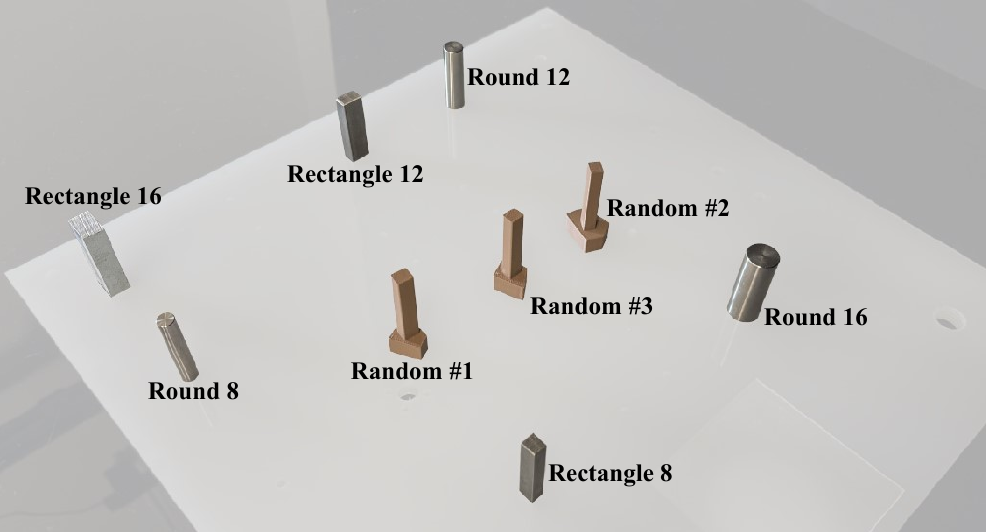} 
  \vspace{10pt} 

\scriptsize
\begin{tabular}{ |p{1.6cm}||p{1.cm}<{\centering}|p{1.2cm}<{\centering}|p{1.2cm}<{\centering}|p{1.3cm}<{\centering}|  }
 \hline
 Name & Scale$^*$ & Clearance$^*$ & Material &  Symmetry\\
 \hline
 Round 8    &  $\Phi 8$    & $\sim 0.8$  &   metal & central\\
 Round 12 &  $\Phi 12$  &  $\sim 0.8$    & metal& central\\
Round 16  & $\Phi 16$ &  $\sim 0.8$  & metal & central\\
Rectangle  8 & $8\times7$ & $\sim 0.6$  &  metal& axial\\
 Rectangle 12&   $12 \times 8$  & $\sim 0.7$ & metal& axial\\
 Rectangle 16 & $16 \times 10$  &  $\sim 0.8$    & metal& axial\\
 Random \#1$^{**}$ & $20\times16$ & $\sim 0.4$ & polylactide &  \textbf{asymmetric} \\
Random \#2$^{**}$ & $22\times 25$ & $\sim 0.4$ & polylactide &  \textbf{asymmetric} \\
 Random \#3$^{**}$ & $23\times17$ & $\sim 0.4$ & polylactide &  \textbf{asymmetric} \\

 \hline
\end{tabular}
 \begin{tablenotes}   
        \footnotesize              
        \item $^{*}$ Scale and clearance are denoted in millimeters.
        \item $^{**}$ Pegs with random generated polygonal base; scales are described with the bounding box of the base.

\end{tablenotes}   
  \caption{Overview of the Peg-in-Hole Tasks in Real-world Experiments.}
  \label{fig:nist_atb}
\vspace{-1.7cm}
\end{figure}

\subsection{Perception Manipulation Funnel}
\label{exp:vf}
We conducted detailed experiments in PyBullet~\cite{coumans2016pybullet} to evaluate the robustness of the perception manipulation funnel for uncertainty elimination in the perception state space (as illustrated in Fig.~\ref{fig:system}-(b)). Specifically, we aim to demonstrate that 1) the perception manipulation funnel is robust against a large range of action uncertainties and 2) the entropy-based exploration significantly accelerates the uncertainty elimination process. 
\begin{figure*}
    \centering
    \includegraphics[width=\textwidth]{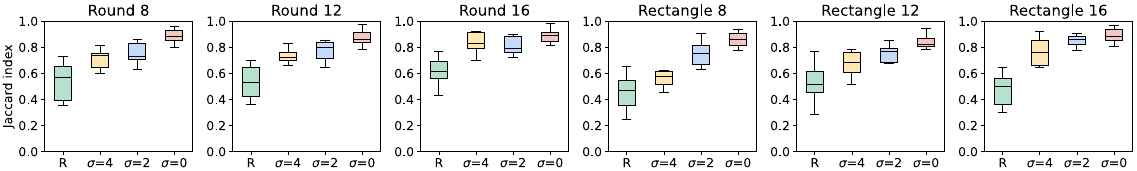}
    \caption{Experimental result of the uncertainty elimination process in simulation with different levels of action uncertainty and object scales. ``R" denotes the random policy; $\sigma$ denotes our entropy-based policy with different uncertainty levels in millimeters.}
    \label{fig:sim_pf_result}
\end{figure*}

To quantify the uncertainty level of actions, given a desired interaction point $ \mathbf{g}_d = (x_d, y_d, 0)$ on the board surface plane ($XY$-plane of $\{O\}$), the actual interacted point $\mathbf{g}_a = (x_a, x_a, 0)$ incorporates additional positional noise as action uncertainty from a 2-D Gaussian distribution as follows:
\begin{gather}    
f(x_a, y_a) = \frac{1}{2 \pi \sqrt{|\Sigma|}} \exp\left(-\frac{1}{2} \mathbf{z}^\top \Sigma^{-1} \mathbf{z}\right)  \\
\mathbf{z} = \begin{bmatrix} x_a - x_d \\ y_a - y_d \end{bmatrix} \quad \Sigma = \begin{bmatrix} 1  \quad 0 \\ 0 \quad 1 \end{bmatrix}\sigma
\end{gather}
where $\sigma$ represents the noise level of distance. We use the Jaccard index to calculate the distance between the ground truth hole area and the union of sampled areas $\{\widetilde{\mathcal{A}}\}=\{\widetilde{T}_{\{OH\},i}\mathcal{A}_h\}^{K=200}_{i=1} \sim P(\widetilde{T}_{\{OH\}})$ based on the refined proposal distribution as the convergence level:
\begin{equation}
    J(\{\widetilde{\mathcal{A}}\}) = \frac{\mid T_{\{OH\}}\mathcal{A}_h \cap \cup \{\widetilde{\mathcal{A}}\} \mid}{\mid T_{\{OH\}}\mathcal{A}_h \cup  \cup \{\widetilde{\mathcal{A}}\} \mid} \approx	 \frac{|\mathcal{A}_h|}{|\cup \{\widetilde{\mathcal{A}}\}|}
\end{equation} 
Random exploration serves as a baseline policy to demonstrate that 1) the intrinsic convergence mechanics of the perception manipulation does not rely on any specific action strategy and 2) the proposed entropy-based exploration benefits the convergence efficiency.

The results, presented in Fig.~\ref{fig:sim_pf_result}, show the system's performance after eight steps of interaction under different levels of action noises and a random policy. Pegs with smaller scales are more sensitive to the system's uncertainty, as the relative error is significantly larger than the pegs with larger scales. Despite high action noise relative to the peg's scale, all the test objects successfully converge to a high convergence level. Even random actions would significantly shrink the proposal distribution of the target hole due to the mechanics of the formulated perception funnel.

\begin{table}[]
\centering
\begin{threeparttable}          

\begin{tabular}{ |p{1.6cm}||p{1cm}<{\centering}|p{2.2cm}<{\centering}|p{1.8cm}<{\centering}|  }
 \hline
 Name & Clearance & Position-based$^1$ & Funnel-based$^2$ \\
 \hline
 Round 8  &  $\sim 0.8$   & 4/10 & \textbf{10/10}\\
 Round 12 &  $\sim 0.8$  &  5/10  & \textbf{10/10}\\
Round 16  & $\sim 0.8$ &  5/10 & \textbf{10/10} \\
Rectangle  8 & $\sim 0.6$ & 3/10 & \textbf{10/10}\\
 Rectangle 12&  $\sim 0.7$ & 5/10 & \textbf{10/10}\\
 Rectangle 16 & $\sim 0.8$  &  4/10 & \textbf{10/10}\\
 Random  \#1 & $\sim 0.4$ & \textbf{1/10} & 9/10\\
 Random  \#2&  $\sim 0.4$ & \textbf{0/10} & 8/10\\
 Random  \#3 & $\sim 0.4$  &  \textbf{0/10} & \textbf{10/10}\\
 \hline
  \hline
 Average & $\backslash$  &  \textbf{3.0/10} & \textbf{9.7/10}\\
  \hline

\end{tabular}
 \begin{tablenotes}    
        \footnotesize               
        \item[1] Baseline: Actuated with Cartesian position controller.
        \item[2] Ours: Actuated with Cartesian impedance controller.
\end{tablenotes}           
\caption{Experimental Result of Real-world Insertion Tasks}
\label{tab:exp_pf}
\end{threeparttable}    
\vspace{-0.6cm}
\end{table}

\begin{table*}[h]
    \centering
    \caption{System Performance under Different Prior Knowledge on the Target Hole}
    \label{tab:results}
    \begin{tabular}{l||ccc|ccc||c}
        \hline
        & \multicolumn{3}{c|}{\textbf{Partially Inside the Hole}} & \multicolumn{3}{c||}{\textbf{Bounded Area}} & \multicolumn{1}{c}{\textbf{Overall}} \\
        \hline
        {\textbf{Object}} & \textbf{Success} & \textbf{Interactions ($\pm$ std)} & \textbf{Uncertainty($\pm$ std)}  & \textbf{Success} & \textbf{Interactions ($\pm$ std)} & \textbf{Uncertainty($\pm$ std)} & Success\\
        \hline
         Round 8     & 5 / 5 & 4.0 $\pm$ 0.7 & 0.152 $\pm$ 0.029 & 5/5 & 4.6 $\pm$ 1.5 & 0.148 $\pm$ 0.040 & 10/10 \\
         Round 12  & 5 / 5 & 4.4 $\pm$ 1.1 & 0.155 $\pm$ 0.048 & 5/5 & 5.0 $\pm$ 1.7 & 0.150 $\pm$ 0.036 & 10/10\\
         Round 16   & 5 / 5 & 4.6 $\pm$ 0.9 & 0.163 $\pm$ 0.060 & 5/5 & 5.6 $\pm$ 1.5 & 0.162 $\pm$ 0.063 & 10/10\\
         Rectangle  8   & 5 / 5 & 6.8 $\pm$ 1.6 & 0.178 $\pm$ 0.031 & 4/5 & 7.8 $\pm$ 1.9 & 0.174 $\pm$ 0.043 & 9/10\\
         Rectangle 12    & 5 / 5 & 6.8 $\pm$ 1.3 & 0.174 $\pm$ 0.046 & 5/5 & 7.6 $\pm$ 1.5 & 0.170 $\pm$ 0.033 & 10/10\\
         Rectangle 16    & 5 / 5 & 7.2 $\pm$ 0.8 & 0.178 $\pm$ 0.031 & 5/5 & 7.8 $\pm$ 2.1 & 0.193 $\pm$ 0.055 & 10/10\\
         Random  \#1   & 5 / 5 & 9.2 $\pm$ 2.2 & 0.172 $\pm$ 0.029 & 4/5 & 9.0 $\pm$ 2.9 & 0.158 $\pm$ 0.036 & 9/10\\
         Random \#2    & 4 / 5 & 9.4 $\pm$ 2.7 & 0.164 $\pm$ 0.062 & 4/5 & 10.2 $\pm$ 1.9 & 0.181 $\pm$ 0.065 & 8/10\\
         Random \#3    & 4 / 5 & 9.0 $\pm$ 2.9 & 0.170 $\pm$ 0.045 & 5/5 & 9.6 $\pm$ 1.8 & 0.178 $\pm$ 0.061 & 9/10\\
        \hline
        \hline
        \textbf{Average}  & \textbf{4.78 / 5} & 6.8 $\pm$ 2.6 & 0.167 $\pm$ 0.041 & \textbf{4.67/5} & 7.5 $\pm$ 2.6 & 0.168 $\pm$ 0.046 & \textbf{9.44/10}\\
        \bottomrule
    \end{tabular}
\label{tab:sys_exp}
\end{table*}

\subsection{Physical Manipulation Funnel}
We evaluate the robustness of the physical manipulation funnel through peg insertion into the target hole as shown in Fig.~\ref{fig:system}-(c), which is considered a known position. We compare the robustness of funnel-based manipulation with position-based manipulation as a baseline, which formulates the insertion process by positioning the peg into the hole without collisions. The collision-free trajectory is defined as a top-down insertion with a vertical peg state after a stable grasp. Since execution uncertainties are inevitable in real-world scenarios, we aim to demonstrate that 1) when system uncertainty exceeds the allowed clearance, funnel-based manipulation for peg insertion is more robust against position-based manipulation in real-world settings, and 2) precise manipulation tasks can be achieved through imprecise funnel-based manipulation.

The experimental result is shown in Table~\ref{tab:exp_pf}. Despite the tight clearance being smaller than the robot execution precision, our funnel-based insertion effectively absorbs unmodeled uncertainties and inserts the peg at a \textit{near-perfect} success rate. In contrast, position-based manipulation expects a precisely executed trajectory while lacking the ability to tolerate such uncertainty. Without the help of chamfers on the peg and the hole, it can easily fail due to the unexpected misalignments or impact caused by the wedging phenomenon~\cite{whitney1982quasi}. Especially for asymmetric pegs with a tighter tolerance in both translation and orientation, we observe that position-based control is ineffective at small scales.

\subsection{Real-world System Performance}
 Ultimately, we combined the perception and physical manipulation funnel in a unified system to perform robust insertion without visual feedback in an open loop, as illustrated in Fig.~\ref{fig:system}-(d). To comprehensively evaluate the proposed system, we consider two levels of prior on the target hole's state: 1) the initial pose of the peg is placed partially inside the target hole by kinesthetic teaching and 2) a larger search area (bounding circle with $\sim30\%$ dilation) is specified, containing the target hole on the board surface. The robot needs to identify the possible state of the target hole on the board surface to finish the insertion. The results (as presented in Table~\ref{tab:sys_exp}) depict the steps of interactions, uncertainty level ($1- J(\{\widetilde{\mathcal{A}\}})$) and the successful rate for each tested peg. Since the proposed system effectively absorbs uncertainties from an imprecise target hole state and interactions, we observe very few failure cases from the overall performance.
 
We acknowledge the gap between the established object-centric theory and real-world implementation; the failure mode can be divided into the following categories: 1) high contact force which breaks the condition of in-hand stable grasp, i.e., the gripper is opened and the peg beyond the observation range of the tactile sensor; 2) high friction force caused by interactions between the peg and the hole that causes the peg largely slipped and is out of the tactile perception range and 3) the inaccurate dynamics model of the manipulator for impedance control that leads to unsatisfied compliance performance during physical contacts. We believe that using a more forceful end-effector or a more advanced compliance mechanism would effectively resolve this issue.

\section{Limitations}
\label{sec:limits}
Our work has certain limitations, which naturally reveal the potential future research directions. First, the current system relies on a stable in-hand grasp of the peg to perform the insertion process. A re-grasping policy~\cite{calandra2018more} can be incorporated to initialize a new insertion trial for drastic slippage that exceeds the tactile perception range. Second, current exploration relies on a priori of the planar exploratory area. By combining visual modalities and learning-based estimation methods, the system can perform environmental identification and insertion without any prior. Third, as our method doesn’t model friction explicitly, a variable impedance controller is considered a requirement when encountering interaction with high resistance. 

We envision a future iteration of the proposed robotic system that leverages multi-modal perception to partially perceive the task environment and further plan funnel-based manipulation strategies to robustly shape the interaction process toward the desired outcome beyond the peg-in-hole insertion task.

\section{Conclusion and Discussion}
In this paper, we have presented a funnel-based manipulation paradigm for robust peg-in-hole insertion, along with a real-world robotic implementation. The advantages of the proposed system have been demonstrated through careful experimentation in simulation and in the real world.  A comprehensive evaluation of the proposed system is provided on a standard NIST ATB benchmark and additional challenging tasks with a satisfactory overall success rate. Most critically, we highlighted the power of compliance and demonstrated how a robot can leverage imprecise compliant interactions to perform precise tasks, similar to how humans achieve dexterous manipulation. For future work, we aim to expand the current theory to non-planar scenarios and develop automated learning methods capable of abstracting funnel-based manipulation mechanics from natural language descriptions or video demonstrations and their interaction mechanisms.

\label{sec:conclusion}

\section*{Acknowledgments}
This work was supported by the award 60NANB24D296 from U.S. Department of Commerce, National Institute of Standards and Technology.

\bibliographystyle{plainnat}
\bibliography{references}

\begin{thebibliography}{57}
\providecommand{\natexlab}[1]{#1}
\providecommand{\url}[1]{\texttt{#1}}
\expandafter\ifx\csname urlstyle\endcsname\relax
  \providecommand{\doi}[1]{doi: #1}\else
  \providecommand{\doi}{doi: \begingroup \urlstyle{rm}\Url}\fi

\bibitem[Akella et~al.(2000)Akella, Huang, Lynch, and Mason]{akella2000parts}
Srinivas Akella, Wesley~H Huang, Kevin~M Lynch, and Matthew~T Mason.
\newblock Parts feeding on a conveyor with a one joint robot.
\newblock \emph{Algorithmica}, 26:\penalty0 313--344, 2000.

\bibitem[Bhatt et~al.(2021)Bhatt, Sieler, Puhlmann, and Brock]{Bhatt-RSS-21}
Aditya Bhatt, Adrian Sieler, Steffen Puhlmann, and Oliver Brock.
\newblock {Surprisingly Robust In-Hand Manipulation: An Empirical Study}.
\newblock In \emph{Proceedings of Robotics: Science and Systems}, Virtual, July 2021.
\newblock \doi{10.15607/RSS.2021.XVII.089}.

\bibitem[Burridge et~al.(1999)Burridge, Rizzi, and Koditschek]{burridge1999sequential}
Robert~R Burridge, Alfred~A Rizzi, and Daniel~E Koditschek.
\newblock Sequential composition of dynamically dexterous robot behaviors.
\newblock \emph{The International Journal of Robotics Research}, 18\penalty0 (6):\penalty0 534--555, 1999.

\bibitem[Calandra et~al.(2018)Calandra, Owens, Jayaraman, Lin, Yuan, Malik, Adelson, and Levine]{calandra2018more}
Roberto Calandra, Andrew Owens, Dinesh Jayaraman, Justin Lin, Wenzhen Yuan, Jitendra Malik, Edward~H Adelson, and Sergey Levine.
\newblock More than a feeling: Learning to grasp and regrasp using vision and touch.
\newblock \emph{IEEE Robotics and Automation Letters}, 3\penalty0 (4):\penalty0 3300--3307, 2018.

\bibitem[Canberk et~al.(2023)Canberk, Chi, Ha, Burchfiel, Cousineau, Feng, and Song]{canberk2023cloth}
Alper Canberk, Cheng Chi, Huy Ha, Benjamin Burchfiel, Eric Cousineau, Siyuan Feng, and Shuran Song.
\newblock Cloth funnels: Canonicalized-alignment for multi-purpose garment manipulation.
\newblock In \emph{2023 IEEE International Conference on Robotics and Automation (ICRA)}, pages 5872--5879. IEEE, 2023.

\bibitem[Chen et~al.(2024)Chen, Bohg, and Liu]{Chen-RSS-24}
Sirui Chen, Jeannette Bohg, and Karen Liu.
\newblock {SpringGrasp: Synthesizing Compliant, Dexterous Grasps under Shape Uncertainty}.
\newblock In \emph{Proceedings of Robotics: Science and Systems}, Delft, Netherlands, July 2024.
\newblock \doi{10.15607/RSS.2024.XX.042}.

\bibitem[Chen et~al.(2023)Chen, Tekden, Deisenroth, and Bekiroglu]{chen2023sliding}
Yiting Chen, Ahmet~Ercan Tekden, Marc~Peter Deisenroth, and Yasemin Bekiroglu.
\newblock Sliding touch-based exploration for modeling unknown object shape with multi-fingered hands.
\newblock In \emph{2023 IEEE/RSJ International Conference on Intelligent Robots and Systems (IROS)}, pages 8943--8950. IEEE, 2023.

\bibitem[Cheng et~al.(2022)Cheng, Huang, Hou, and Mason]{cheng2022contact}
Xianyi Cheng, Eric Huang, Yifan Hou, and Matthew~T Mason.
\newblock Contact mode guided motion planning for quasidynamic dexterous manipulation in 3d.
\newblock In \emph{2022 International Conference on Robotics and Automation (ICRA)}, pages 2730--2736. IEEE, 2022.

\bibitem[Christiansen(1991)]{christiansen1991manipulation}
Alan~D Christiansen.
\newblock Manipulation planning for empirical backprojections.
\newblock In \emph{Proceedings. 1991 IEEE International Conference on Robotics and Automation}, pages 762--763. IEEE Computer Society, 1991.

\bibitem[Coumans and Bai(2016)]{coumans2016pybullet}
Erwin Coumans and Yunfei Bai.
\newblock Pybullet, a python module for physics simulation for games, robotics and machine learning, 2016.

\bibitem[Dafle et~al.(2014)Dafle, Rodriguez, Paolini, Tang, Srinivasa, Erdmann, Mason, Lundberg, Staab, and Fuhlbrigge]{dafle2014extrinsic}
Nikhil~Chavan Dafle, Alberto Rodriguez, Robert Paolini, Bowei Tang, Siddhartha~S Srinivasa, Michael Erdmann, Matthew~T Mason, Ivan Lundberg, Harald Staab, and Thomas Fuhlbrigge.
\newblock Extrinsic dexterity: In-hand manipulation with external forces.
\newblock In \emph{2014 IEEE International Conference on Robotics and Automation (ICRA)}, pages 1578--1585. IEEE, 2014.

\bibitem[Deimel and Brock(2016)]{deimel2016novel}
Raphael Deimel and Oliver Brock.
\newblock A novel type of compliant and underactuated robotic hand for dexterous grasping.
\newblock \emph{The International Journal of Robotics Research}, 35\penalty0 (1-3):\penalty0 161--185, 2016.

\bibitem[Dong et~al.(2021)Dong, Jha, Romeres, Kim, Nikovski, and Rodriguez]{dong2021tactile}
Siyuan Dong, Devesh~K Jha, Diego Romeres, Sangwoon Kim, Daniel Nikovski, and Alberto Rodriguez.
\newblock Tactile-rl for insertion: Generalization to objects of unknown geometry.
\newblock In \emph{2021 IEEE International Conference on Robotics and Automation (ICRA)}, pages 6437--6443. IEEE, 2021.

\bibitem[Eppner and Brock(2015)]{eppner2015planning}
Clemens Eppner and Oliver Brock.
\newblock Planning grasp strategies that exploit environmental constraints.
\newblock In \emph{2015 IEEE international conference on robotics and automation (ICRA)}, pages 4947--4952. IEEE, 2015.

\bibitem[Eppner et~al.(2015)Eppner, Deimel, Alvarez-Ruiz, Maertens, and Brock]{eppner2015exploitation}
Clemens Eppner, Raphael Deimel, Jos{\'e} Alvarez-Ruiz, Marianne Maertens, and Oliver Brock.
\newblock Exploitation of environmental constraints in human and robotic grasping.
\newblock \emph{The International Journal of Robotics Research}, 34\penalty0 (7):\penalty0 1021--1038, 2015.

\bibitem[Erdmann(1985)]{erdmann1985using}
Michael Erdmann.
\newblock Using backprojections for fine motion planning with uncertainty.
\newblock In \emph{Proceedings. 1985 IEEE International Conference on Robotics and Automation}, volume~2, pages 549--554. IEEE, 1985.

\bibitem[Erdmann and Mason(1988)]{erdmann1988exploration}
Michael~A Erdmann and Matthew~T Mason.
\newblock An exploration of sensorless manipulation.
\newblock \emph{IEEE Journal on Robotics and Automation}, 4\penalty0 (4):\penalty0 369--379, 1988.

\bibitem[Goldberg(1993)]{goldberg1993orienting}
Kenneth~Y Goldberg.
\newblock Orienting polygonal parts without sensors.
\newblock \emph{Algorithmica}, 10\penalty0 (2):\penalty0 201--225, 1993.

\bibitem[Hang et~al.(2016)Hang, Li, Stork, Bekiroglu, Pokorny, Billard, and Kragic]{hang2016hierarchical}
Kaiyu Hang, Miao Li, Johannes~A Stork, Yasemin Bekiroglu, Florian~T Pokorny, Aude Billard, and Danica Kragic.
\newblock Hierarchical fingertip space: A unified framework for grasp planning and in-hand grasp adaptation.
\newblock \emph{IEEE Transactions on robotics}, 32\penalty0 (4):\penalty0 960--972, 2016.

\bibitem[Hang et~al.(2019)Hang, Morgan, and Dollar]{hang2019pre}
Kaiyu Hang, Andrew~S Morgan, and Aaron~M Dollar.
\newblock Pre-grasp sliding manipulation of thin objects using soft, compliant, or underactuated hands.
\newblock \emph{IEEE Robotics and Automation Letters}, 4\penalty0 (2):\penalty0 662--669, 2019.

\bibitem[Hang et~al.(2021)Hang, Bircher, Morgan, and Dollar]{hang2021manipulation}
Kaiyu Hang, Walter~G Bircher, Andrew~S Morgan, and Aaron~M Dollar.
\newblock Manipulation for self-identification, and self-identification for better manipulation.
\newblock \emph{Science robotics}, 6\penalty0 (54):\penalty0 eabe1321, 2021.

\bibitem[Haugaard et~al.(2021)Haugaard, Langaa, Sloth, and Buch]{haugaard2021fast}
Rasmus Haugaard, Jeppe Langaa, Christoffer Sloth, and Anders Buch.
\newblock Fast robust peg-in-hole insertion with continuous visual servoing.
\newblock In \emph{Conference on Robot Learning}, pages 1696--1705. PMLR, 2021.

\bibitem[Hogan(1984)]{hogan1984impedance}
Neville Hogan.
\newblock Impedance control: An approach to manipulation.
\newblock In \emph{1984 American control conference}, pages 304--313. IEEE, 1984.

\bibitem[Hou et~al.(2020)Hou, Jia, and Mason]{hou2020manipulation}
Yifan Hou, Zhenzhong Jia, and Matthew~T Mason.
\newblock Manipulation with shared grasping.
\newblock \emph{arXiv preprint arXiv:2006.02996}, 2020.

\bibitem[Inoue et~al.(2017)Inoue, De~Magistris, Munawar, Yokoya, and Tachibana]{inoue2017deep}
Tadanobu Inoue, Giovanni De~Magistris, Asim Munawar, Tsuyoshi Yokoya, and Ryuki Tachibana.
\newblock Deep reinforcement learning for high precision assembly tasks.
\newblock In \emph{2017 IEEE/RSJ International Conference on Intelligent Robots and Systems (IROS)}, pages 819--825. IEEE, 2017.

\bibitem[Jin et~al.(2021)Jin, Zhu, Wang, and Tomizuka]{jin2021contact}
Shiyu Jin, Xinghao Zhu, Changhao Wang, and Masayoshi Tomizuka.
\newblock Contact pose identification for peg-in-hole assembly under uncertainties.
\newblock In \emph{2021 American Control Conference (ACC)}, pages 48--53. IEEE, 2021.

\bibitem[Khadivar and Billard(2023)]{khadivar2023adaptive}
Farshad Khadivar and Aude Billard.
\newblock Adaptive fingers coordination for robust grasp and in-hand manipulation under disturbances and unknown dynamics.
\newblock \emph{IEEE Transactions on Robotics}, 39\penalty0 (5):\penalty0 3350--3367, 2023.

\bibitem[Kim and Rodriguez(2022)]{kim2022active}
Sangwoon Kim and Alberto Rodriguez.
\newblock Active extrinsic contact sensing: Application to general peg-in-hole insertion.
\newblock In \emph{2022 International Conference on Robotics and Automation (ICRA)}, pages 10241--10247. IEEE, 2022.

\bibitem[Kimble et~al.(2020)Kimble, Van~Wyk, Falco, Messina, Sun, Shibata, Uemura, and Yokokohji]{kimble2020benchmarking}
Kenneth Kimble, Karl Van~Wyk, Joe Falco, Elena Messina, Yu~Sun, Mizuho Shibata, Wataru Uemura, and Yasuyoshi Yokokohji.
\newblock Benchmarking protocols for evaluating small parts robotic assembly systems.
\newblock \emph{IEEE robotics and automation letters}, 5\penalty0 (2):\penalty0 883--889, 2020.

\bibitem[Lauri et~al.(2022)Lauri, Hsu, and Pajarinen]{lauri2022partially}
Mikko Lauri, David Hsu, and Joni Pajarinen.
\newblock Partially observable markov decision processes in robotics: A survey.
\newblock \emph{IEEE Transactions on Robotics}, 39\penalty0 (1):\penalty0 21--40, 2022.

\bibitem[Levine et~al.(2016)Levine, Finn, Darrell, and Abbeel]{levine2016end}
Sergey Levine, Chelsea Finn, Trevor Darrell, and Pieter Abbeel.
\newblock End-to-end training of deep visuomotor policies.
\newblock \emph{Journal of Machine Learning Research}, 17\penalty0 (39):\penalty0 1--40, 2016.

\bibitem[Li et~al.(2014{\natexlab{a}})Li, Bekiroglu, Kragic, and Billard]{li2014learning_adap}
Miao Li, Yasemin Bekiroglu, Danica Kragic, and Aude Billard.
\newblock Learning of grasp adaptation through experience and tactile sensing.
\newblock In \emph{2014 IEEE/RSJ International Conference on Intelligent Robots and Systems}, pages 3339--3346. Ieee, 2014{\natexlab{a}}.

\bibitem[Li et~al.(2014{\natexlab{b}})Li, Yin, Tahara, and Billard]{li2014learning}
Miao Li, Hang Yin, Kenji Tahara, and Aude Billard.
\newblock Learning object-level impedance control for robust grasping and dexterous manipulation.
\newblock In \emph{2014 IEEE International Conference on Robotics and Automation (ICRA)}, pages 6784--6791. IEEE, 2014{\natexlab{b}}.

\bibitem[Li et~al.(2016)Li, Hang, Kragic, and Billard]{li2016dexterous}
Miao Li, Kaiyu Hang, Danica Kragic, and Aude Billard.
\newblock Dexterous grasping under shape uncertainty.
\newblock \emph{Robotics and Autonomous Systems}, 75:\penalty0 352--364, 2016.

\bibitem[Lozano-Perez et~al.(1984)Lozano-Perez, Mason, and Taylor]{lozano1984automatic}
Tomas Lozano-Perez, Matthew~T Mason, and Russell~H Taylor.
\newblock Automatic synthesis of fine-motion strategies for robots.
\newblock \emph{The International Journal of Robotics Research}, 3\penalty0 (1):\penalty0 3--24, 1984.

\bibitem[Luo et~al.(2018)Luo, Solowjow, Wen, Ojea, and Agogino]{luo2018deep}
Jianlan Luo, Eugen Solowjow, Chengtao Wen, Juan~Aparicio Ojea, and Alice~M Agogino.
\newblock Deep reinforcement learning for robotic assembly of mixed deformable and rigid objects.
\newblock In \emph{2018 IEEE/RSJ International Conference on Intelligent Robots and Systems (IROS)}, pages 2062--2069. IEEE, 2018.

\bibitem[Luo et~al.(2019)Luo, Solowjow, Wen, Ojea, Agogino, Tamar, and Abbeel]{luo2019reinforcement}
Jianlan Luo, Eugen Solowjow, Chengtao Wen, Juan~Aparicio Ojea, Alice~M Agogino, Aviv Tamar, and Pieter Abbeel.
\newblock Reinforcement learning on variable impedance controller for high-precision robotic assembly.
\newblock In \emph{2019 International Conference on Robotics and Automation (ICRA)}, pages 3080--3087. IEEE, 2019.

\bibitem[Luo et~al.(2021)Luo, Sushkov, Pevceviciute, Lian, Su, Vecerik, Ye, Schaal, and Scholz]{luo2021robust}
Jianlan Luo, Oleg Sushkov, Rugile Pevceviciute, Wenzhao Lian, Chang Su, Mel Vecerik, Ning Ye, Stefan Schaal, and Jon Scholz.
\newblock Robust multi-modal policies for industrial assembly via reinforcement learning and demonstrations: A large-scale study.
\newblock \emph{arXiv preprint arXiv:2103.11512}, 2021.

\bibitem[Luo et~al.(2024)Luo, Hu, Xu, Tan, Berg, Sharma, Schaal, Finn, Gupta, and Levine]{luo2024serl}
Jianlan Luo, Zheyuan Hu, Charles Xu, You~Liang Tan, Jacob Berg, Archit Sharma, Stefan Schaal, Chelsea Finn, Abhishek Gupta, and Sergey Levine.
\newblock Serl: A software suite for sample-efficient robotic reinforcement learning.
\newblock \emph{arXiv preprint arXiv:2401.16013}, 2024.

\bibitem[Mason(1985)]{mason1985mechanics}
Matthew Mason.
\newblock The mechanics of manipulation.
\newblock In \emph{Proceedings. 1985 IEEE International Conference on Robotics and Automation}, volume~2, pages 544--548. IEEE, 1985.

\bibitem[Morgan et~al.(2021)Morgan, Wen, Liang, Boularias, Dollar, and Bekris]{morgan2021vision}
Andrew~S Morgan, Bowen Wen, Junchi Liang, Abdeslam Boularias, Aaron~M Dollar, and Kostas Bekris.
\newblock Vision-driven compliant manipulation for reliable, high-precision assembly tasks.
\newblock \emph{arXiv preprint arXiv:2106.14070}, 2021.

\bibitem[Morgan et~al.(2022)Morgan, Hang, Wen, Bekris, and Dollar]{morgan2022complex}
Andrew~S Morgan, Kaiyu Hang, Bowen Wen, Kostas Bekris, and Aaron~M Dollar.
\newblock Complex in-hand manipulation via compliance-enabled finger gaiting and multi-modal planning.
\newblock \emph{IEEE Robotics and Automation Letters}, 7\penalty0 (2):\penalty0 4821--4828, 2022.

\bibitem[Odhner et~al.(2014)Odhner, Jentoft, Claffee, Corson, Tenzer, Ma, Buehler, Kohout, Howe, and Dollar]{odhner2014compliant}
Lael~U Odhner, Leif~P Jentoft, Mark~R Claffee, Nicholas Corson, Yaroslav Tenzer, Raymond~R Ma, Martin Buehler, Robert Kohout, Robert~D Howe, and Aaron~M Dollar.
\newblock A compliant, underactuated hand for robust manipulation.
\newblock \emph{The International Journal of Robotics Research}, 33\penalty0 (5):\penalty0 736--752, 2014.

\bibitem[Peternel et~al.(2018)Peternel, Tsagarakis, Caldwell, and Ajoudani]{peternel2018robot}
Luka Peternel, Nikos Tsagarakis, Darwin Caldwell, and Arash Ajoudani.
\newblock Robot adaptation to human physical fatigue in human--robot co-manipulation.
\newblock \emph{Autonomous Robots}, 42:\penalty0 1011--1021, 2018.

\bibitem[Posa et~al.(2014)Posa, Cantu, and Tedrake]{posa2014direct}
Michael Posa, Cecilia Cantu, and Russ Tedrake.
\newblock A direct method for trajectory optimization of rigid bodies through contact.
\newblock \emph{The International Journal of Robotics Research}, 33\penalty0 (1):\penalty0 69--81, 2014.

\bibitem[Rodriguez(2021)]{rodriguez2021unstable}
Alberto Rodriguez.
\newblock The unstable queen: Uncertainty, mechanics, and tactile feedback.
\newblock \emph{Science Robotics}, 6\penalty0 (54):\penalty0 eabi4667, 2021.

\bibitem[Shao et~al.(2020)Shao, Migimatsu, and Bohg]{shao2020learning}
Lin Shao, Toki Migimatsu, and Jeannette Bohg.
\newblock Learning to scaffold the development of robotic manipulation skills.
\newblock In \emph{2020 IEEE International Conference on Robotics and Automation (ICRA)}, pages 5671--5677. IEEE, 2020.

\bibitem[Shao et~al.(2024)Shao, Li, Keyvanian, Chaudhari, Kumar, and Figueroa]{Shao-RSS-24}
Yifei~Simon Shao, Tianyu Li, Shafagh Keyvanian, Pratik Chaudhari, Vijay Kumar, and Nadia Figueroa.
\newblock {Constraint-Aware Intent Estimation for Dynamic Human-Robot Object Co-Manipulation}.
\newblock In \emph{Proceedings of Robotics: Science and Systems}, Delft, Netherlands, July 2024.
\newblock \doi{10.15607/RSS.2024.XX.028}.

\bibitem[Shi et~al.(1994)]{shi1994good}
Jianbo Shi et~al.
\newblock Good features to track.
\newblock In \emph{1994 Proceedings of IEEE conference on computer vision and pattern recognition}, pages 593--600. IEEE, 1994.

\bibitem[Simunovi{\"A}~Simunovi{\"A}(1979)]{simunovia1979information}
Sergio~Natalio Simunovi{\"A}~Simunovi{\"A}.
\newblock \emph{An information approach to parts mating}.
\newblock PhD thesis, Massachusetts Institute of Technology, 1979.

\bibitem[Tang et~al.(2024)Tang, Akinola, Xu, Wen, Handa, Wyk, Fox, Sukhatme, Ramos, and Narang]{Tang-RSS-24}
Bingjie Tang, Iretiayo Akinola, Jie Xu, Bowen Wen, Ankur Handa, Karl~Van Wyk, Dieter Fox, Gaurav~S. Sukhatme, Fabio Ramos, and Yashraj Narang.
\newblock {AutoMate: Specialist and Generalist Assembly Policies over Diverse Geometries}.
\newblock In \emph{Proceedings of Robotics: Science and Systems}, Delft, Netherlands, July 2024.
\newblock \doi{10.15607/RSS.2024.XX.064}.

\bibitem[Tang et~al.(2016)Tang, Lin, Zhao, Chen, and Tomizuka]{tang2016autonomous}
Te~Tang, Hsien-Chung Lin, Yu~Zhao, Wenjie Chen, and Masayoshi Tomizuka.
\newblock Autonomous alignment of peg and hole by force/torque measurement for robotic assembly.
\newblock In \emph{2016 IEEE international conference on automation science and engineering (CASE)}, pages 162--167. IEEE, 2016.

\bibitem[Toussaint et~al.(2014)Toussaint, Ratliff, Bohg, Righetti, Englert, and Schaal]{toussaint2014dual}
Marc Toussaint, Nathan Ratliff, Jeannette Bohg, Ludovic Righetti, Peter Englert, and Stefan Schaal.
\newblock Dual execution of optimized contact interaction trajectories.
\newblock In \emph{2014 IEEE/RSJ International Conference on Intelligent Robots and Systems}, pages 47--54. IEEE, 2014.

\bibitem[Whitney et~al.(1982)]{whitney1982quasi}
Daniel~E Whitney et~al.
\newblock Quasi-static assembly of compliantly supported rigid parts.
\newblock \emph{Journal of Dynamic Systems, Measurement, and Control}, 104\penalty0 (1):\penalty0 65--77, 1982.

\bibitem[Xie et~al.(2022)Xie, Yu, Zhao, Zhang, Zhou, Wang, Wang, and Xiong]{xie2022learning}
Liang Xie, Hongxiang Yu, Yinghao Zhao, Haodong Zhang, Zhongxiang Zhou, Minhang Wang, Yue Wang, and Rong Xiong.
\newblock Learning to fill the seam by vision: Sub-millimeter peg-in-hole on unseen shapes in real world.
\newblock In \emph{2022 International conference on robotics and automation (ICRA)}, pages 2982--2988. IEEE, 2022.

\bibitem[Zhang et~al.(2021)Zhang, Sun, Kuang, and Tomizuka]{zhang2021learning}
Xiang Zhang, Liting Sun, Zhian Kuang, and Masayoshi Tomizuka.
\newblock Learning variable impedance control via inverse reinforcement learning for force-related tasks.
\newblock \emph{IEEE Robotics and Automation Letters}, 6\penalty0 (2):\penalty0 2225--2232, 2021.

\bibitem[Zhou and Held(2023)]{zhou2023learning}
Wenxuan Zhou and David Held.
\newblock Learning to grasp the ungraspable with emergent extrinsic dexterity.
\newblock In \emph{Conference on Robot Learning}, pages 150--160. PMLR, 2023.

\end{thebibliography}

\newpage

\section*{Appendix}
\label{appendix}
\subsection{Cartesian Impedance Controller}
\label{app:imp}
The state of Franka Emika Panda in the configuration space at time $t$ is denoted as $\mathbf{q}_t\in \mathbb{R}^7$, with its velocity as $\dot{\mathbf{q}}_t \in \mathbb{R}^7$ and acceleration as $\Ddot{\mathbf{q}}_t \in \mathbb{R}^7$. Given the desired state $\mathbf{x}^{\text{d}}_t$ and current state $\mathbf{x}_t$ of the manipulated frame, the robot motion is regulated by the Cartesian impedance controller based on the simulated force $\mathbf{F}$ as:
\begin{equation}
    \mathbf{M}(\mathbf{q}_t)\Ddot{\mathbf{q}_t} + \mathbf{C}(\mathbf{q}_t, \dot{\mathbf{q}}_t)\dot{\mathbf{q}}_t + \mathbf{g}(\mathbf{q}_t) = \mathbf{J}(\mathbf{q}_t)^\top \mathbf{F}
\end{equation}
where $\mathbf{M}(\mathbf{q}_t)\in \mathbb{R}^{7\times7}$ is the inertial matrix, $\mathbf{C}(\mathbf{q}_t, \dot{\mathbf{q}}_t)$ is the Coriolis and centrifugal matrix, $\mathbf{g}(\mathbf{q}_t)$ is the gravity vector and $\mathbf{J}(\mathbf{q}_t)^\top \in \mathbb{R}^{7\times6}$ is the transpose of the Jacobian matrix.

\subsection{Tactile Pose Estimation}
\label{app:inhand}
\begin{figure}[H]
    \centering
    \includegraphics[width=\linewidth]{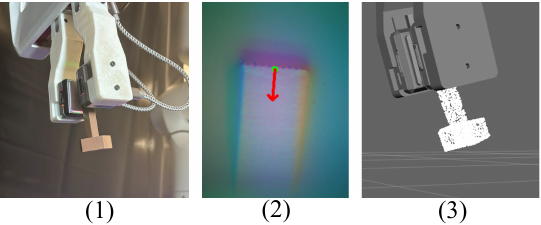}
    \caption{(1) A stable grasp of the peg for insertion; (2) The corresponding tactile image with the object's imprint; (3) The estimated in-hand pose of the peg.}
    \label{fig:inhand}
\vspace{-0.2cm}
\end{figure}

We estimate the peg's in-hand pose based on its tail end's tactile imprint (as shown in Fig.~\ref{fig:inhand}). After a stable grasping, the motion of the object is limited in the SE(2) space as  $[x,y,\omega]$ on the planar gel surface. We identify the keypoints from the corner of the peg in the tactile image with a gradient-based feature extraction method~\cite{shi1994good}. As the connecting segment between the extracted corners represents the base of the peg, the peg's position is defined by its center, and the peg's orientation is perpendicular to the formed segment.
To alleviate the perception noise for a stable estimation, we apply an Extended Kalman Filter (EKF) to smooth the estimated result and only take it into consideration when it reaches stability.
As long as the peg after slippage does not exceed the observation range of the fingertip tactile sensors, the estimated in-hand pose is further leveraged for object-centric manipulation planning. For our experiments, the perception range of the Gelsight Mini tactile sensor is around $25mm\times20mm$, and the end of the peg is required to show up completely.

\subsection{System Transition Function for MPC}
\label{app:trans}
We model the system transition function with a linear prior, as if there is no external contact, the steady state will reach the desired state during each interaction. The discrete linear state-space model of the system is formulated as:
\begin{equation}
\label{eq:linear}
    \mathbf{x}_{t+1} = \mathbf{A}_t\mathbf{x}_{t} + \mathbf{B}_t\mathbf{u}_t \text{ where } \mathbf{u}_t = \mathbf{x}_t^{\text{d}} - \mathbf{x}_t
\end{equation}
where $\mathbf{A}\in\mathbb{R}^{6\times6}$ is the state transition matrix and $\mathbf{B} \in \mathbb{R}^{6\times6}$ is the input matrix. Both $\mathbf{A}$ and $\mathbf{B}$ are initialized as an Identity matrix.

However, under the un-modeled contact after corner alignment, the steady state of the peg is the result of the interaction between the potential from the impedance control and its environmental constraints. We aim to approximate the intricate transition function online, as $\mathbf{A}$ and $\mathbf{B}$ from Eq.~\eqref{eq:linear} are updated by an RLS adaptive filter. The system dynamics is rewritten as:
\begin{equation}
\begin{aligned}
    \mathbf{x}_{t+1} = &\Phi_t \gamma_t +\epsilon_t, \text{ where } \epsilon_t\in \mathbb{R}^6\\
    \Phi_t = \begin{bmatrix}
        \mathbf{x}_t \\ \mathbf{u}_t 
    \end{bmatrix}^\top \in \mathbb{R}^{6\times12}, & \quad \gamma_t = \begin{bmatrix}
        \mathbf{A}_t \quad \mathbf{B}_t
    \end{bmatrix} \in \mathbb{R}^{12\times6}
\end{aligned}
\end{equation}
$\Phi_t$ is the regression vector, $\gamma_t$ is the system to update and $\epsilon_t$ is the prediction  error.
Given a covariance matrix $\mathbf{P}_t$ and a forgetting factor $\lambda$, the gain $\mathbf{K}_t$ is calculated based on the prediction error after each interaction as:
\begin{gather}
    \epsilon_t = \mathbf{x}_{t+1} - \Phi_t \gamma_t\\
    \mathbf{K}_t = \mathbf{P}_t  \Phi_t^\top(\lambda+\Phi_t\mathbf{P}_t\Phi_t^\top)^{-1}
\end{gather}
and the system parameters are updated as:
\begin{gather}
    \gamma_{t+1} = \gamma_{t}+\mathbf{K}_t\epsilon_t\\
    \gamma_{t+1} = [\mathbf{A}_{t+1} \quad \mathbf{B}_{t+1}]
\end{gather}
The covariance matrix is then updated as:
\begin{equation}
    \mathbf{P}_{t+1} = \frac{1}{\lambda}(\mathbf{P}_t - \mathbf{K}_{t}\Phi_t\mathbf{P}_t)
\end{equation}

Thus, the approximated system transition function $\widetilde{\Pi}_t(\mathbf{x}_t, \mathbf{x}_d^{\text{d}})$ is defined as:
\begin{equation}
\label{sys_tran}
    \widetilde{\Pi}_t(\mathbf{x}_t, \mathbf{x}_d^{\text{d}}) = \mathbf{A}_t\mathbf{x}_t + \mathbf{B}_t(\mathbf{x}_t^{\text{d}} - \mathbf{x}_t)
\end{equation}
and updated online to identify the underlying dynamics.

\subsection{Computational Complexity Analysis}
We perform an additional analysis of the computational efficiency of the proposed system. For Alg.~\ref{interactive_p}, we adopted a rejection-based sampling algorithm with the theoretical time complexity per accepted sample as $\mathcal{O}(M)$ ($M$ is the reciprocal of the acceptance rate conditioned on the gathered inequality constraints defined in \textit{Definition}~\ref{ineq}).
Alg.~\ref{alg:alignment} calculates the intersection of the union over a constant number of samples and thus possesses a time complexity of $\mathcal{O}(1)$. For Alg.~\ref{alg:insertion}, since we are using a linear state-space model as defined in Eq.~\eqref{sys_tran}, which leads to a convex QP problem that can be efficiently solved, the MPC planner runs at $\sim 30Hz$.

\end{document}